\documentclass{article}

\usepackage[preprint]{neurips_2026}

\usepackage[utf8]{inputenc} 
\usepackage[T1]{fontenc}    
\usepackage{hyperref}       
\usepackage{url}            
\usepackage{booktabs}       
\usepackage{amsfonts}       
\usepackage{amsmath}        
\usepackage{amssymb}        
\usepackage{nicefrac}       
\usepackage{microtype}      
\usepackage{xcolor}         
\usepackage{tikz-cd}
\usepackage{todonotes}
\usepackage{subcaption}
\usepackage{enumitem}
\usepackage{amsthm}    

\newtheorem{theorem}{Theorem}

\newtheorem{corollary}[theorem]{Corollary}
\newtheorem*{remark}{Remark}   

\usepackage{algorithm}     
\usepackage{algpseudocode} 
\usepackage{tikz}
\usepackage{graphicx}
\usepackage{xcolor}
\usepackage{amsmath,amssymb}
\usetikzlibrary{arrows.meta, calc, positioning}

\definecolor{plotblue}{HTML}{1d4ed8}
\definecolor{labelgray}{HTML}{3a3a3a}
\definecolor{lightgray}{HTML}{b0b0b0}
\definecolor{annogray}{HTML}{666666}
\definecolor{gtgreen}{HTML}{16a34a}
\definecolor{inputamber}{HTML}{d97706}

\usetikzlibrary{positioning, calc}

\definecolor{cCircle}{HTML}{D65F5F}
\definecolor{cBump}{HTML}{6ACC65}
\definecolor{cMNIST}{HTML}{EE854A}
\definecolor{cMVCNN}{HTML}{956CB4}
\definecolor{cConf}{HTML}{8C613C}
\definecolor{labelgray}{HTML}{555555}

\usepackage{wrapfig}
\usepackage{orcidlink}      

\tikzset{
  ptitle/.style = {font=\sffamily\bfseries\normalsize, anchor=west},
  pdef/.style   = {font=\small, text=labelgray, anchor=west},
  pnote/.style  = {font=\scriptsize, text=labelgray, anchor=north},
  img/.style    = {inner sep=0pt},
}

\usepackage{defs}

\newtheorem*{theoremRef}{Theorem \ref{thm:main} (bis)}

\title{It Just Takes Two:\\Scaling Amortized Inference to Large Sets}

\author{Antoine Wehenkel\thanks{Equal Contribution} ~ \orcidlink{0000-0001-5022-3999}\\
  Apple\\
  \texttt{awehenkel@apple.com}
  \And
  Michael Kagan
  \orcidlink{0000-0002-3386-6869}\\
  SLAC National Accelerator Laboratory\\
  \texttt{makagan@slac.stanford.edu}
  \And
  Lukas Heinrich
  \orcidlink{0000-0002-4048-7584}\\
  TU M\" unchen \\
  \texttt{Lukas.Heinrich@cern.ch}
  \And
  Chris Pollard$^*$
  \orcidlink{0000-0002-3690-3960}\\
  University of Warwick \\
  \texttt{christopher.pollard@warwick.ac.uk}
}

\begin{document}

\maketitle

\begin{abstract}
  Neural posterior estimation has emerged as a powerful tool for amortized inference, with growing adoption across scientific and applied domains. In many of these applications, the conditioning variable is a \emph{set of observations} whose elements depend not only on the target but also on unknown factors shared across the set. Optimal inference therefore requires treating the set jointly, which in turn requires training the estimator at the deployment set size---a regime where memory and compute quickly become prohibitive. We introduce a simple, theoretically grounded strategy that decouples representation learning from posterior modeling. Our method trains a mean-pool \deepset on sets of size at most two, producing an encoder that generalizes to arbitrary set sizes. The inference head is then finetuned on pre-aggregated embeddings, making training cost essentially independent of the deployment set size $N$. Across scalar, image, multi-view 3D, molecular, and high-dimensional conditional generation benchmarks with $N$ in the thousands, our approach matches or outperforms standard baselines at a fraction of the compute.
\end{abstract}

\section{Introduction}
Many inference problems involve analyzing a large collection of coupled measurements---for instance, extracting a signal strength from an ensemble of particle-collider events or reconstructing a 3D scene from several 2D images. Because each observation contains only partial information, optimal inference requires both processing the set jointly and quantifying the residual uncertainty. As a solution, neural posterior estimation~\citep[NPE,][]{lueckmann2017flexible,wildberger2023flow,geffner2023compositional} has emerged as an effective tool for amortized probabilistic inference. In this specific context, however, effective inference hinges on an architecture capable of processing observation sets whose cardinality $N$ may need to be large.

Two strategies for such tasks dominate the literature, each trading off scalability against correctness. The first trains an estimator on individual measurements and recombines per-measurement scores or likelihoods at test time~\citep{geffner2023compositional,papamakarios2019sequential}. Trivially scalable in $N$, this approach nonetheless leads to incorrect or expensive inference whenever latent factors shared across the set---unrelated to the parameter of interest, such as a common detector calibration or scene illumination---couple the measurements. The second strategy instead processes the set jointly by learning a permutation-invariant aggregation of per-measurement features, most commonly via a mean-pool Deep Set encoder~\citep{zaheer2017deep} trained end-to-end at the deployment cardinality~\citep{radev2020bayesflow,wiqvist2019partially,sainsbury2024likelihood,rodrigues2021hnpe}. While principled, end-to-end training has memory and compute footprints that scale linearly with the targeted cardinality $N$ per gradient step, preventing its application to arbitrary $N$ in practice. Practitioners are thus left to choose between a scalable but sub-optimal estimator and a principled but hard-to-scale one.

We show this trade-off stems from conflating two subproblems: learning a permutation-invariant data representation, and modeling the posterior given that representation. Only the former has a training cost that scales with N, and, as we prove, it can be learned from sets of size at most two without loss of performance---under mean-pool aggregation, a reasonable restriction given its prevalence in practice. We turn this insight into a simple procedure named PAIRS (Pretraining Aggregators for Inference at aRbitrary Set-sizes). First, PAIRS jointly pretrains a mean-pool \deepset encoder and an inference head on sets of size $N \leq 2$. Second, it freezes the encoder and finetunes the inference head on embeddings pre-aggregated from sets of arbitrary size. Our contributions are (i) a theoretical result showing that pair-training recovers the same encoder (up to affine reparameterization) as training at any larger cardinality; (ii) the PAIRS procedure, whose training cost is decoupled from the deployment set size and which applies to any mean-pool \deepset pipeline; and (iii) empirical validation across scalar, image, 3D, molecular, and image-generation tasks with N in the thousands, matching or outperforming baselines at a fraction of the compute.
\section{Background and Methods}\label{sec:method}
We consider the task of predicting a target quantity, denoted $\theta \in \Theta$, given a set of $N$ observations $X := \{x_i\}^N_{i=1} \in \mathcal{X}^N$. Our goal is to build an amortized probabilistic predictor that can easily be applied on-the-fly to a new observation set $X$ of arbitrary size $N \in \mathcal{N} \subseteq \mathbb{Z}^+$. Taking a Bayesian modeling perspective, we formalize this task as building a neural surrogate $q(\theta \mid X)$ of the posterior distribution $p(\theta \mid X) \propto p(\theta, X)$ implied by the assumed hierarchical model 
\begin{equation}
    p(\theta, X) := p(\theta)\int p(\psi) \prod^N_{i=1} p(x_i \mid \theta, \psi)~\text{d}\psi, \label{eq:hierarchical}
\end{equation}
where $\psi \in \Psi$ parameterizes nuisance effects that are not of direct interest but impact observations collectively and must be accounted for.

While this hierarchical model ignores potential dependence between $N$ and ($\theta$, $\psi$), it provides a practical way to formalize inference given an observation set $X$ that is compatible with most real-world scenarios. \autoref{eq:hierarchical} shows that marginalizing $\psi$ couples the observations, so the posterior does not simply factorize across them. As also noted by \citet{Heinrich:2023bmt}, a neural surrogate exposed only to single $(x_i, \theta)$ pairs during training therefore has no way to discover this coupling.

\subsection{Neural Posterior Estimation and Deep Sets}
In the context of simulation-based inference~\citep[SBI,][]{cranmer2020frontier}, NPE~\footnote{We use ``NPE'' for any deep generative posterior model, not only normalizing-flow-based ones.}\citep[see, e.g.,][]{lueckmann2017flexible, geffner2023compositional, wildberger2023flow} has become a popular algorithm to learn a neural surrogate of $p(\theta \mid X)$ for amortized Bayesian inference.  In short, NPE parameterizes the posterior with a conditional generative model~(CGM), which can be decomposed into (1) a neural statistic estimator~(NSE), denoted by $T_{\omega}: \mathcal{X}^N \rightarrow \mathbb{R}^l$, compressing observations $X$ into $l\text{-dimensional}$ representations, and (2) a deep probabilistic model~\citep[DPM,][]{wehenkel2022inductive} that parameterizes the posterior as $q_{\phi}(\theta \mid T_{\omega}(X))$. 

 For the DPM class, we use normalizing flows~\citep[NF,][]{papamakarios2021normalizing} for all experiments except one where $\theta$ is an image, for which flow matching~\citep[FM,][]{lipman2022flow} is more appropriate. While the two classes differ in their practical training surrogates, both can be understood as  solving the following optimization problem 
 \begin{align}
    \phi^\star, \omega^\star\in& \arg\max_{\phi, \omega} \mathbb{E}_{\theta, X \sim p(\theta, X)} \left[\log q_{\phi}(\theta \mid T_\omega(X))\right]. \label{eq:NPE}
\end{align}  
\emph{Remark:} NF optimizes (2) directly via exact likelihoods; FM optimizes a conditional vector-field regression loss that upper-bounds the same Kullback-Leibler divergence [Lipman et al., 2022].

In practice, embedding a proper inductive bias into the NSE $T_\omega$ is key to learning a good surrogate of $p(\theta \mid X)$. The hierarchical latent model from \autoref{eq:hierarchical} implies exchangeability of the elements $x_i$ of $X$, making \deepset architectures a natural choice. In this work, we consider mean-pool Deep Sets, which parameterize the NSE as 
\begin{align}
    T_{\omega}(X) = \frac{1}{N} \sum^N_{i=1} t_{\omega}(x_i), \label{eq:deep_set}
\end{align}
where $t_{\omega}: \mathcal{X} \rightarrow \mathbb{R}^l$ is a neural network mapping an individual observation $x_i$ to an $l$-dimensional vector. 

As noted by \citet{rodrigues2021hnpe, Heinrich:2023bmt, sainsbury2024likelihood}, a surrogate $q(\theta\mid X)$ trained at a fixed set size does not generalize to other sizes. The standard remedy is to randomize $N \sim p(N)$ and give $N$ as an extra input to the CGM during training, which becomes $q_{\phi}(\theta \mid T_{\omega}, N)$. This training scheme requires $p(N)$ to have support up to the largest size $N_{\max}$ expected at test time. The memory and compute footprint of each gradient step on a batch of $B$ sets then scales as $\mathcal{O}(B\, N_{\max})$ (peak memory) and $\mathcal{O}(B\, \bar{N})$ (compute), with $\bar{N} \triangleq \mathbb{E}_{p(N)}[N]$. Since training requires many such steps, these scaling laws rule out arbitrarily large $N$ under a fixed hardware budget.

\subsection{Representation Learning and Exponential Families}
As a solution to the scaling issue exposed in the previous section, we introduce \textit{Pretraining Aggregators for Inference at aRbitrary Set-sizes}~(PAIRS), a three-stage procedure described by Algorithm~\ref{alg:pairs}. First, PAIRS pretrains the \deepset NSE and the  DPM jointly, by optimizing \autoref{eq:NPE} on observation sets composed of $N\leq 2$ elements. Second, PAIRS embeds each individual measurement via the learned \deepset encoder $t_{\omega}$. Third, PAIRS finetunes the DPM on the per-set aggregated embeddings following an appropriate set size distribution $p(N)$. Crucially, since the encoder is frozen and embeddings are pre-aggregated, the memory and compute footprint of each finetuning step is independent of $N_{\text{max}}$.

\begin{algorithm}[ht]
\caption{PAIRS: Pretraining Aggregators for Inference at aRbitrary Set-sizes}
\label{alg:pairs}
\begin{algorithmic}[1]
\Require Pretraining dataset $\mathcal{D}_{\text{pre}} = \{(\theta^{(b)}, \psi^{(b)}, X^{(b)})\}_b$ with $|X^{(b)}| \in \{1,2\}$;
         finetuning dataset $\mathcal{D}_{\text{ft}} = \{(\theta^{(j)}, \psi^{(j)}, X^{(j)})\}_{j=1}^{M}$ with $|X^{(j)}| \sim p(N)$;
         per-observation embedder $t_\omega$;
         conditional generative model $q_\phi(\theta \mid T_\omega, N)$ with loss $\ell_{\phi}: \Theta \times \mathbb{R}^l \times \mathcal{N} \rightarrow \mathbb{R}$.
\Statex
\Statex \textbf{Stage 1 -- Pretrain the embedder at small cardinality.}
\Repeat
  \State Sample a minibatch $\{(\theta^{(b)}, X^{(b)})\}_{b=1}^{B} \sim \mathcal{D}_{\text{pre}}$
  \State Compute aggregates $T^{(b)}_{\omega} = \tfrac{1}{|X^{(b)}|}\sum_{x \in X^{(b)}} t_\omega(x)$
  \State Update $(\omega, \phi)$ by ascending
         $\nabla_{\omega,\phi}\, \tfrac{1}{B}\!\sum_b \ell_{\phi}(\theta^{(b)}, T_{\omega}^{(b)}, |X^{(b)}|)$
\Until{converged}
\Statex
\Statex \textbf{Stage 2 -- Cache aggregated embeddings on the finetuning set.}
\For{$j = 1, \ldots, M$}
  \State $\bar{T}^{(j)} \gets \tfrac{1}{|X^{(j)}|} \sum_{x \in X^{(j)}} t_\omega(x)$
         \Comment{single forward pass per observation, computed once}
\EndFor
\State Store $\mathcal{D}_{\text{agg}} = \{(\theta^{(j)}, \bar{T}^{(j)}, |X^{(j)}|)\}_{j=1}^{M}$.
\Statex
\Statex \textbf{Stage 3 -- Finetune the inference head on cached embeddings.}
\Repeat
  \State Sample a minibatch $\{(\theta^{(b)}, \bar{T}^{(b)}, N^{(b)})\}_{b=1}^{B} \sim \mathcal{D}_{\text{agg}}$
  \State Update $\phi$ by ascending
         $\nabla_{\phi}\, \tfrac{1}{B}\!\sum_b \ell_{\phi}(\theta^{(b)}, \bar{T}^{(b)}, N^{(b)})$
\Until{converged}
\State \Return $(t_\omega,\, q_\phi)$
\end{algorithmic}
\end{algorithm}

While the compute and memory complexity of embedding all measurements is linear in the dataset size, this is a one-off, parallelizable operation. Unlike standard approaches, which jointly amortize $N$ over the NSE and DPM and incur gradient step costs that grow with $\nmax$, PAIRS has memory and compute costs at training time that are independent of  $\nmax$.

Intuitively, under the conditional-iid model of \autoref{eq:hierarchical}, the joint density at any cardinality is fully determined by the per-event likelihood $p(x \mid \theta, \psi)$ and the prior on $\psi$. A single labeled observation $(x, \theta)$ reveals only the marginal $p(x \mid \theta)$, while pairs $(\{x_1, x_2\}, \theta)$ are the smallest cardinality that also exposes the coupling induced by the shared nuisance. Moreover, larger cardinalities add no new information about the underlying generative process. \autoref{thm:main} makes this precise: under the assumption that the data admits a finite-dimensional mean-pool sufficient statistic, PAIRS' optimum coincides with that of training strategies using $N > 2$.
\begin{theorem}[Pair training recovers a mean-pool sufficient statistic]
\label{thm:main}
Suppose there exists a continuous $t^\star : \mathcal{X} \to \mathbb{R}^k$ with
$k \leq l$ such that, for every $N \geq 1$,
\begin{equation}
\big(M_{t^\star}(X_N),\, N\big) \;:=\; \Big(\tfrac{1}{N}\sum_{i=1}^N t^\star(x_i),\, N\Big)
\quad \text{is minimal sufficient for } \theta \text{ given } X_N := \{x_i\}_{i=1}^N.
\tag{M}
\end{equation}
Under mild regularity on $t_\omega$ (Appendix~\ref{app:proof}, assumptions A1--A4 and (S)),
and under the idealization that the optimization problem in Equation~\ref{eq:NPE}
is solved globally at $N \in \{1, 2\}$, the aggregate $(T_\omega(X_N), N)$
is sufficient for $\theta$ at every cardinality $N \geq 1$.
\end{theorem}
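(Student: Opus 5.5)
We prove that the global optimum of \autoref{eq:NPE} at $N\in\{1,2\}$ forces $t_\omega$ to be an affine reparameterization of $t^\star$, modulo a null set, and then propagate this to all $N$ through the mean-pool structure. The argument has three steps.

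\textbf{Step 1: sufficiency at $N\in\{1,2\}$.} Since the DPM class is assumed rich enough (part of A1--A4), the global maximizer of the expected log-likelihood satisfies $q_{\phi^\star}(\theta\mid T_{\omega^\star}(X),N)=p(\theta\mid X)$ almost surely for $N=1,2$. Otherwise the gap $\mathbb{E}[\mathrm{KL}(p(\theta\mid X)\,\|\,q)]>0$ could be reduced. In particular $T_\omega(\cdot)$ is a.s. sufficient at $N=1$ and $N=2$. Sufficiency implies $T_\omega$ determines the minimal sufficient statistic, so there exist measurable maps $g_1,g_2$ with
\[
t^\star(x)=g_1\big(t_\omega(x)\big),\qquad \tfrac12\big(t^\star(x_1)+t^\star(x_2)\big)=g_2\Big(\tfrac12\big(t_\omega(x_1)+t_\omega(x_2)\big)\Big).
\]
Continuity and the regularity assumption (S) upgrade these a.s. identities to identities on the support, where I assume the support is connected with nonempty interior in $t_\omega$-image.

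\textbf{Step 2: a Jensen-type functional equation.} Write $u_i=t_\omega(x_i)$ and set $g:=g_1$ on $U:=t_\omega(\mathcal{X})$. Combining the two displays gives
\[
\tfrac12\big(g(u_1)+g(u_2)\big)=g_2\big(\tfrac12(u_1+u_2)\big)\quad\text{for all }u_1,u_2\in U.
\]
Taking $u_1=u_2$ gives $g_2=g$ on $U$. Hence $g$ satisfies Jensen's equation $g(\tfrac{u_1+u_2}{2})=\tfrac{g(u_1)+g(u_2)}{2}$ wherever the midpoint lies in the domain of $g$. I expect this step to be the main obstacle. Jensen's equation plus continuity yields affinity only on a convex set, so I need $U$, or the relevant midpoint set, to be convex, or I need $g$ to extend continuously to $\mathrm{conv}(U)$. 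I would first use $g_2$ itself as the extension on midpoints and iterate to dyadic convex combinations. Continuity of $g_2$, supplied by (S) or A-type assumptions, then closes the argument to all convex combinations, giving $g(u)=Au+b$ on $\mathrm{conv}(U)$. Local affinity on overlapping convex neighborhoods plus connectedness is the fallback if convexity fails.

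\textbf{Step 3: propagation to arbitrary $N$.} With $t^\star=A\,t_\omega+b$, averaging gives $M_{t^\star}(X_N)=A\,T_\omega(X_N)+b$ for every $N$. So $M_{t^\star}(X_N)$ is a measurable function of $T_\omega(X_N)$. Since $(M_{t^\star}(X_N),N)$ is sufficient by (M), so is $(T_\omega(X_N),N)$, by the factorization theorem: $p(\theta\mid X_N)$ depends on $X_N$ only through $T_\omega(X_N)$ and $N$. The same identity gives the claimed affine recovery of the encoder. Note that $k\le l$ is what allows $A$ to be a surjective linear map rather than forcing an inverse.
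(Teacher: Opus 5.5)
Your Steps 1 and 3 agree with the paper. Step 3 is even stated more carefully: you only need $M_{t^\star}(X_N)$ to be a function of $T_\omega(X_N)$, not an invertible reparametrization. The local half of Step 2 is also a neat variant. Setting $u_1=u_2$ gives $g_2=g$ on $U$, so on any ball $B\subset U$ Jensen's equation holds with midpoints in $B$. Continuity of $g$, which comes from the quotient-map hypothesis (A3) rather than from (S), then gives $g=A(\cdot)+b$ on $B$. This replaces the paper's elimination of $\tilde h$ by pinning $y_2=c$ and reducing to Cauchy's equation on $V$.

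The gap is the passage from that ball to all of $U$, which you flag but do not close. Your primary route is circular. Extending $g_2$ from $\tfrac12(U+U)$ to $\tfrac14(U+U+U+U)$ via $\tfrac12\big(g_2(m_1)+g_2(m_2)\big)$ requires this to be well defined, i.e.\ $\sum_{i=1}^4 g(u_i)$ must depend only on $\sum_{i=1}^4 u_i$. That is the $N=4$ sufficiency statement, which the data at $N\in\{1,2\}$ give you only once affinity is already known. Continuity of the iterated extension is equally unproven. Your fallback, affinity on overlapping convex neighbourhoods plus connectedness, works only if every point of $U$ has a convex neighbourhood inside $U$, i.e.\ essentially if $U$ is open. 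Under your own hypothesis (connected with nonempty interior), $U$ may have boundary points or lower-dimensional pieces at which midpoints leave $U$. There the Jensen equation restricted to $U$ need not constrain $g$ at all.

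The missing idea is the mechanism behind the paper's Step 3(d). Keep the full two-point relation $g(y_1)+g(y_2)=\tilde h(y_1+y_2)$, with $\tilde h(z):=2g_2(z/2)$ defined on all of $U+U$ (in particular at midpoints outside $U$). Then pin one argument in the ball $B=B(c,r)$ where affinity is already known. If $y,y'\in U$ with $|y-y'|<2r$, take $y_2=c-\tfrac12(y-y')$ and $y_2'=c+\tfrac12(y-y')$ in $B$, so that $y+y_2=y'+y_2'$. Then $g(y)+g(y_2)=\tilde h(y+y_2)=g(y')+g(y_2')$, hence $g(y)-g(y')=A(y_2'-y_2)=A(y-y')$. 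So $y\mapsto g(y)-Ay$ is constant along chains in $U$ with steps shorter than $2r$. Since a connected set is chain-connected at every scale, $g=A(\cdot)+b$ on all of $U$, and your Step 3 then completes the proof.
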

A formal statement and proof of Theorem~\ref{thm:main} are given in Appendix~\ref{app:proof}. In short, the proof relies on two main elements of PAIRS. First, the surrogate family must contain the true posteriors, which is a reasonable assumption given the expressivity of modern neural network architectures. Second, under optimal training, this assumption directly translates into sufficiency of the representations found for $N=1$ and $N=2$. The definition of sufficient statistics then leads to a Cauchy equation, which implies an affine form, and thus extrapolation to arbitrary $N$ under mean pooling. Two interesting corollaries follow from proving \autoref{thm:main}. First, Corollary~\ref{cor:ef} indicates that using a mean-pool \deepset is equivalent to assuming $p(X_N \mid \theta)$ belongs to the exponential family. Second, Corollary~\ref{cor:invariance} shows that any training cardinality $N \geq 2$ yields the same representation (up to affine equivalence), so enlarging it beyond $\{1, 2\}$ is not necessary.

\subsection{From Theory to Practice}\label{sec:bridging}
Theorem~\ref{thm:main} rests on three idealizations that we now examine in turn.
\paragraph{(i) The model admits a $k$-dimensional mean-pool sufficient statistic $t^\star$.} Under standard regularity, Corollary~\ref{cor:ef} shows this architectural assumption is equivalent to the $\psi$-marginal likelihood, $p(X\mid \theta)$, being exponential family~(EF). Mean-pool Deep~Sets thus implicitly restrict amortized inference to EF marginals---a restriction inherent to the architecture, not introduced by PAIRS. 
While \citet{wagstaff2019limitations,wagstaff2022universal} demonstrated that the embedding dimension $l$ needs to grow with $N$ to achieve universality, Corollary~\ref{cor:ef} rather exposes the implicit assumption hidden behind using mean-pool Deep~Sets to process observation sets of arbitrary sizes. 
Although this implicit bias is what allows PAIRS to work, it is milder than it appears. 
The EF condition applies to the $\psi$-marginal $p(X \mid \theta)$, not to the per-observation conditional $p(x \mid \theta, \psi)$. Observations within a set can therefore remain strongly coupled---the coupling is carried by the base measure $h_N(X)$ and reflects the shared-$\psi$ structure of the hierarchical model---even when each $p(x \mid \theta, \psi)$ is simple. What EF constrains is only \emph{how} $\theta$ enters the marginal: additively through $\sum_i S(x_i)$. Moreover, EF likelihoods asymptotically approximate a broad class of non-EF likelihoods~\citep{10.1214/aos/1176348252}, i.e.~there exists an embedding dimension $l$ to approximate $p(\theta \mid X)$ well even when the model is not globally EF.
\paragraph{(ii) The \deepset encoder is expressive enough.} This bundles two requirements from the proof of Theorem~\ref{thm:main}: the embedder family $t_\omega$ is a universal approximator of continuous maps $\mathcal X\to\mathbb R^l$ (assumption~A1), and the embedding width satisfies $l\ge k$ (otherwise no linear map $\mathbb R^l\to\mathbb R^k$ can recover $t^\star$ from $t_\omega$ at $N=1$). The first is a mild assumption for modern neural architectures. The second is a hyperparameter under the practitioner's control: as $k$ is unknown a priori, $l$ should be chosen to maximize validation performance of PAIRS. If (i) holds exactly, and given enough data, the hyperparameter search should find $l\ge k$ and close the representation gap. Interestingly, even if (i) does not apply---e.g.\ for order-statistic families with no finite-dimensional mean-pool sufficient statistic---we expect increasing $l$ to improve the approximation between the closest EF marginal and the true non-EF marginal $p(X \mid \theta)$. For instance, a sufficiently wide $t_\omega$ can mimic a discretized empirical CDF through threshold features. Crucially, the cost of growing $l$ is paid once at pretraining and, while it may reduce the compression efficiency, it does not depend on the deployment cardinality $N$, so $l$ remains a cheap but powerful hyperparameter even when (i) is only approximate.
\paragraph{(iii) The NPE problem is solved globally at $N\in\{1,2\}$.} In practice NPE is trained to a local minimum on a finite dataset, not to the global optimum. Two mechanisms make Theorem~\ref{thm:main} robust to this gap. \emph{Cauchy stability.} The proof reduces sufficiency at $N=2$ to the Cauchy functional equation $g(y_1)+g(y_2)=g(y_1+y_2)$, whose continuous solutions are affine. This equation is Hyers--Ulam-stable (i.e., approximate solutions are close to exact ones): if it holds only up to a residual $\varepsilon$, then $g$ is within $O(\varepsilon)$ of an affine map. 
Small training residuals therefore do not destabilize the affine-recovery argument: $t_\omega$ remains close to an affine image of $t^\star$ , and the finetuned head can absorb the affine part directly.
Even when $t_\omega$ is only approximately an affine image of $t^\star$, the aggregate $T_\omega(X_N) = \tfrac{1}{N}\sum_i t_\omega(x_i)$ is an unbiased estimator of the population feature $\bar{t}_\omega(\theta,\psi) := \mathbb E_{p(x\mid\theta,\psi)}[t_\omega(x)]$ with variance $O(1/N)$. Information about $\theta$ thus splits into a \emph{representation} factor--fixed at pretraining and controlled by how identifiable the true sufficient statistic $t^\star$ is--and a \emph{concentration} factor that shrinks with $N$ for free at deployment. Pair training fixes the former; large-$N$ inference improves the latter without any further training of $t_\omega$. More broadly, even without an exact EF-sufficient statistic, PAIRS learns an informative, additively composable representation: the entire set is compressed into a single $l$-dimensional vector with a straightforward mean-update rule, yielding tight posteriors at favorable compute.

\emph{Empirical check.} Figure~\ref{fig:bivgauss} confirms both mechanisms on a bivariate Gaussian model: $x\mid\theta,\psi\sim\mathcal N(\theta,\psi)$ with a Wishart prior on the covariance $\psi$ and a Normal--Wishart prior on $\theta$. This setting satisfies (i) exactly, with $t^\star(x)=(x_1,x_2,x_1^2,x_2^2,x_1x_2)$ minimal sufficient at every $N$, and the analytical posterior is available in closed form. PAIRS, trained exclusively at $N\in\{1,2\}$ and frozen thereafter, tracks the analytical posterior up to $N=10^5$; the NLL decreases monotonically with $N$ and converges to the analytical lower bound of the true posterior, without any further training of $t_\omega$.
\begin{figure}[h]
    \centering
    \begin{subfigure}[t]{0.23\linewidth}
        \centering
        \includegraphics[width=\linewidth]{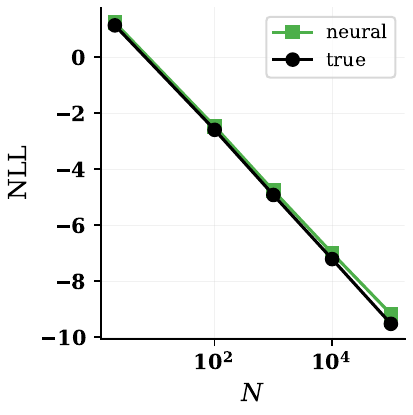}
        \caption{}
        \label{fig:bivgaussscaling}
    \end{subfigure}
    \hspace{.05cm}
    \begin{subfigure}[t]{0.73\linewidth}
        \centering
        \includegraphics[width=\linewidth]{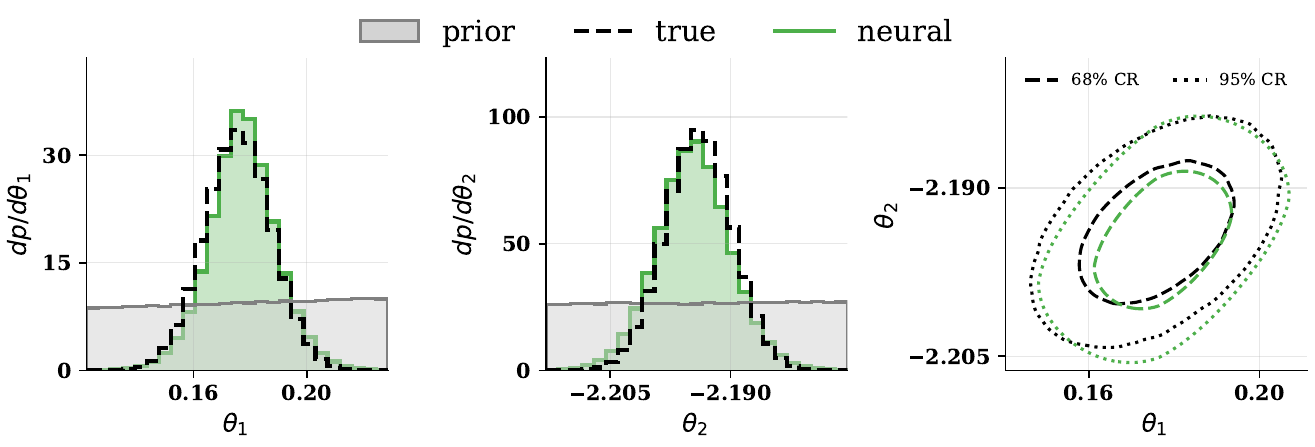}
        \caption{}
        \label{fig:bivgauss10000}
    \end{subfigure}
    \caption{
    PAIRS recovers the analytical posterior on a bivariate Gaussian model, a setting where assumption~(M) holds exactly. (a) Mean test NLL vs.\ set size $N$: PAIRS tracks the analytical posterior up to $N = 10^5$. (b) Neural vs.\ analytical posterior (1D marginals and 2D credible region contours) for a set with $N = 10^4$.
    }
    \label{fig:bivgauss}
\end{figure}
\section{Related Work}\label{sec:related}
Within SBI, two families of scalable approaches dominate. The first composes per-observation objects at test time---marginal likelihoods $p(x_i \mid \theta)$ \citep{papamakarios2019sequential} or posterior scores $\nabla_{\theta}\log p(\theta \mid x_i)$ \citep{geffner2023compositional}---which, as also noted by \citet{Heinrich:2023bmt} and shown in \autoref{eq:hierarchical}, is sub-optimal whenever shared nuisances $\psi$ couple observations, unless $\psi$ is absorbed into $\theta$ and marginalized post-hoc at significant cost. The second learns set representations directly \citep{radev2020bayesflow, wiqvist2019partially, sainsbury2024likelihood, rodrigues2021hnpe}, typically end-to-end at or near the deployment cardinality; \citet{sainsbury2024likelihood} explicitly advocate a curriculum over $N$. These approaches are principled but scale linearly in $N_{\max}$ per gradient step. Orthogonal to the set-size axis, \citet{chen2020neural} learn per-observation near-sufficient statistics via an infomax objective and, like us, invoke Pitman--Koopman--Darmois (see Corollary~\ref{cor:ef}) to motivate a finite-dimensional statistic; their construction, however, targets a single observation and does not address scaling in set cardinality. Our work is complementary as it shows that the representation-learning step itself can be decoupled from the target set size.  

Beyond SBI, Deep Set architectures \citep{zaheer2017deep} have motivated extensive work on scaling to massive sets. A prominent line introduces cross-attention with a small set of learned latent queries \citep{lee2019set, jaegle2021perceiver} to reduce the $\mathcal{O}(N^2)$ cost of full self-attention to $\mathcal{O}(NM)$. These efforts are orthogonal to ours as they target per-step cost at fixed $N$, whereas PAIRS targets training cost as a function of the deployment $N$. In principle, PAIRS can be layered on such architectures, and for the restricted case of set-independent queries---which reduces to weighted mean-pool---we conjecture our theoretical guarantee to carry over directly. For general attention pooling, however, free extrapolation is unlikely~\citep{press2021train, anil2022exploring}: Theorem~\ref{thm:main} crucially exploits the additive, non-interacting structure of mean-pool aggregation, which is lost when aggregation weights are themselves learned functions of the full set. From the theoretical angle, \citet{wagstaff2019limitations, wagstaff2022universal} show that the embedding dimension must grow with $N$ to guarantee universal approximation, reinforcing the practice of training at test-time cardinalities. Our Corollary~\ref{cor:ef}~(\autoref{app:proof}) sharpens this picture: whenever the marginal likelihood is exponential-family, or well approximated by one, a fixed embedding dimension suffices at every $N$, so the worst-case dependence on cardinality can be avoided under modeling assumptions that are mild in many SBI applications.
\section{Experiments}
\label{sec:experiments}
We now empirically assess PAIRS in diverse realistic scenarios where the assumptions underlying \autoref{thm:main} are not necessarily satisfied. We first walk the reader through a toy problem inspired by particle physics, in which observations must be processed jointly to achieve good predictive performance. We then validate the proposed strategy on four low-dimensional parameter-estimation tasks spanning diverse observation modalities, comparing it against baselines a practitioner might consider. Finally, we showcase PAIRS on novel-view synthesis from a set of images, a high-dimensional conditional generative modeling task where end-to-end training at the deployment cardinality is computationally out of reach.

\subsection{Illustrative example: the bump hunt} \label{sec:bump}
\begin{wrapfigure}[16]{l}{0.45\textwidth}  
    \vspace{-1.em}
    \centering
    \includegraphics[width=1.\linewidth]{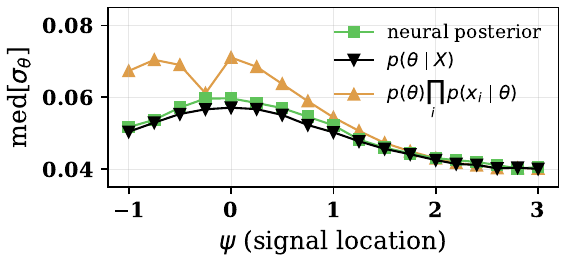}
    \vspace{-1.5em}
    \caption{\small \textbf{PAIRS matches MCMC on the bump hunt, while per-event marginalization fails.} Median posterior standard deviation for $\theta$ as a function of the nuisance signal location $\psi$, at $N = 100$. PAIRS tracks the MCMC posterior width computed on the full set, whereas the na\" ive $\prod_i p(x_i \mid \theta)$ surrogate, which ignores the shared nuisance, is substantially underconfident.}
    \label{fig:bumphunt}
\end{wrapfigure}

\begin{figure}[ht]
    \centering
    \includegraphics[width=.95\linewidth]{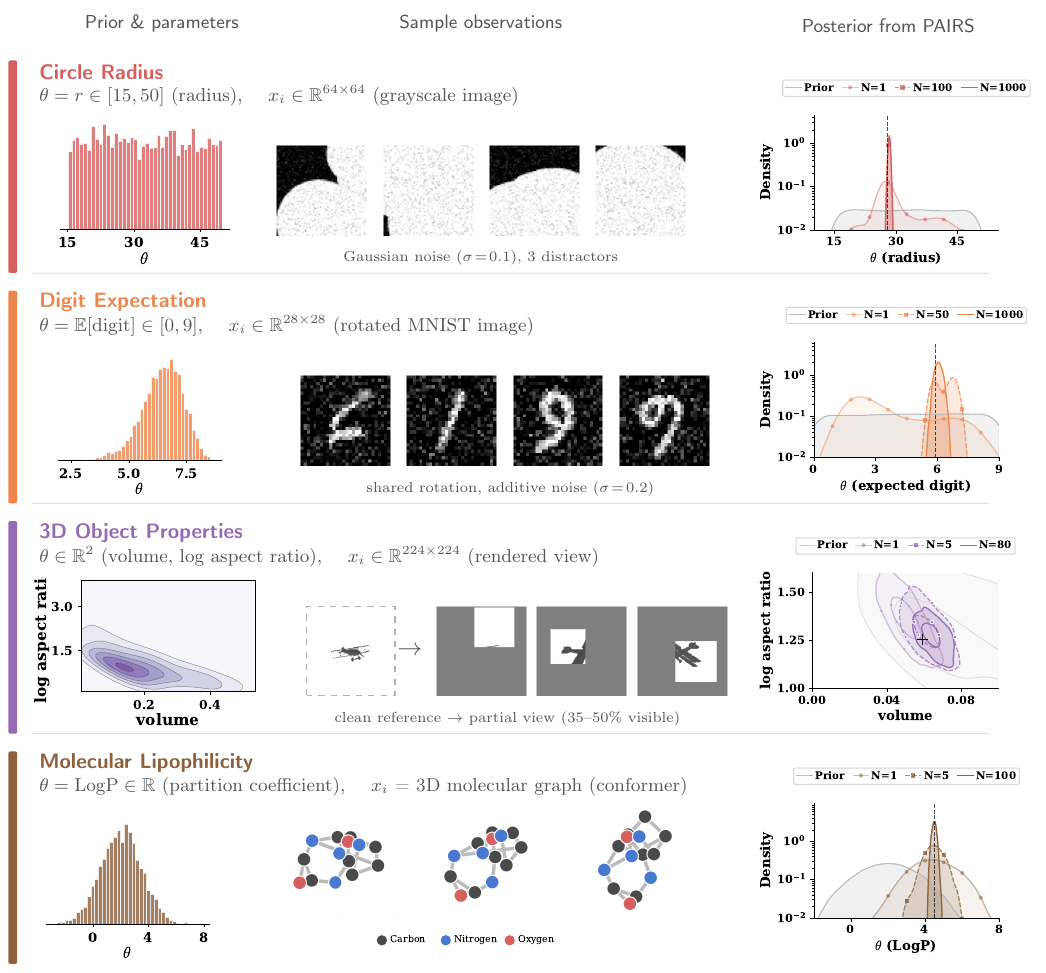}
    \caption{\small \textbf{Evaluation tasks.} Each row shows the prior over the parameter of interest, representative observations at a fixed $\theta$, and the PAIRS posterior as $N$ grows. \textbf{Circle Radius.} Radius $\theta \sim \mathcal{U}(15, 50)$ recovered from $64{\times}64$ images cluttered with distractor circles. The target circle position is the nuisance and multiple observations are needed to properly marginalize it out. \textbf{Digit Expectation.} Expected digit value $\theta = \sum_k k p_k$ from rotated MNIST images sharing a common rotation angle (nuisance). Confusion between $6$ and $9$ creates ambiguity at low $N$, when the nuisance cannot be inferred. \textbf{3D Object Properties.} Volume and log aspect ratio of ModelNet40 objects~\citep{wu20153d} from partial $224{\times}224$ rendered views. \textbf{Molecular Lipophilicity.} Octanol--water partition coefficient (LogP) of drug-like molecules from 3D conformers~\citep{axelrod2022geom}. We provide the full specification of each task in Appendix~\ref{app:exp_details}.}
\label{fig:tasks_description}

\end{figure}

Resonance searches (``bump hunts'') in particle physics have resulted in the discovery of many fundamental particles, most recently the Higgs boson~\citep{ATLAS:2012yve,CMS:2012qbp}. Following \citet{Heinrich:2023bmt}, we consider a stylized version of this setting in which we infer a signal fraction $\theta \in [0,1]$, treating the signal location $\psi \sim \mathcal{N}(1, 4)$ as a nuisance that globally couples observations. Each observation is drawn from a two-component mixture: a signal component $\mathcal{N}(\psi, 0.1)$ with probability $\theta$ and a background component $\mathcal{N}(0, 1)$ with probability $1 - \theta$. Inferring $\theta$ from a set of observations therefore requires implicitly identifying $\psi$ from their joint distribution. While simple, this model captures the essential structure of the analyses that provided evidence for the Higgs boson. To assess the correctness of the PAIRS-based NPE model, we compare its posterior over $\theta$ to a compute-intensive Markov Chain Monte Carlo~(MCMC) estimate on the full set of observations. \autoref{fig:bumphunt} shows that the PAIRS-learned aggregator tracks the true width of $p(\theta \mid X)$ at $N = 100$ across all tested values of $\psi$, demonstrating that the learned set representation is as informative as the raw observations, even when the likelihood is not in the exponential family and non-trivial marginalization over a shared nuisance is required. PAIRS, moreover, clearly outperforms the na\" ive approach in which the likelihood $p(X \mid \theta)$ is approximated as a product of marginal likelihoods, $\prod_i p(x_i \mid \theta)$.

\subsection{Systematic Evaluation}\label{sec:systematic_evaluation}
We evaluate PAIRS on four tasks described in \autoref{fig:tasks_description} and compare it against natural alternatives a practitioner might consider. Two baselines isolate the role of the set cardinality used to pretrain the Deep Set encoder. First, $N=1$ trains the full model on single observations only, which is effectively equivalent to approximating $p(X \mid \theta)$ as a product of per-observation marginal likelihoods. Then, \textbf{$N=1\text{-}10$} extends pretraining to $N \in \{1, \dots, 10\}$ to gauge the marginal value of a larger pretraining budget. PAIRS itself corresponds to \textbf{$N=1\text{-}2$}. To test whether the probabilistic objective is essential, or whether features learned for point prediction suffice, the \textbf{Gaussian} baseline replaces the normalizing flow with a regression head trained under mean-squared error on $N \in \{1, 2\}$, then fits a per-$N$ standard deviation post hoc to recover an estimated posterior distribution. Finally, on synthetic tasks where data at the maximum cardinality can be generated cheaply, we include an \textbf{end-to-end (E2E)} baseline that trains all components jointly at a fixed target $N = 1000$. All baselines share the same per-task architecture (except Gaussian, which swaps the flow for a regression head); model selection uses validation performance, and each baseline is run with three random seeds.
\begin{figure}[ht]
    \centering
    \includegraphics[width=1.\linewidth]{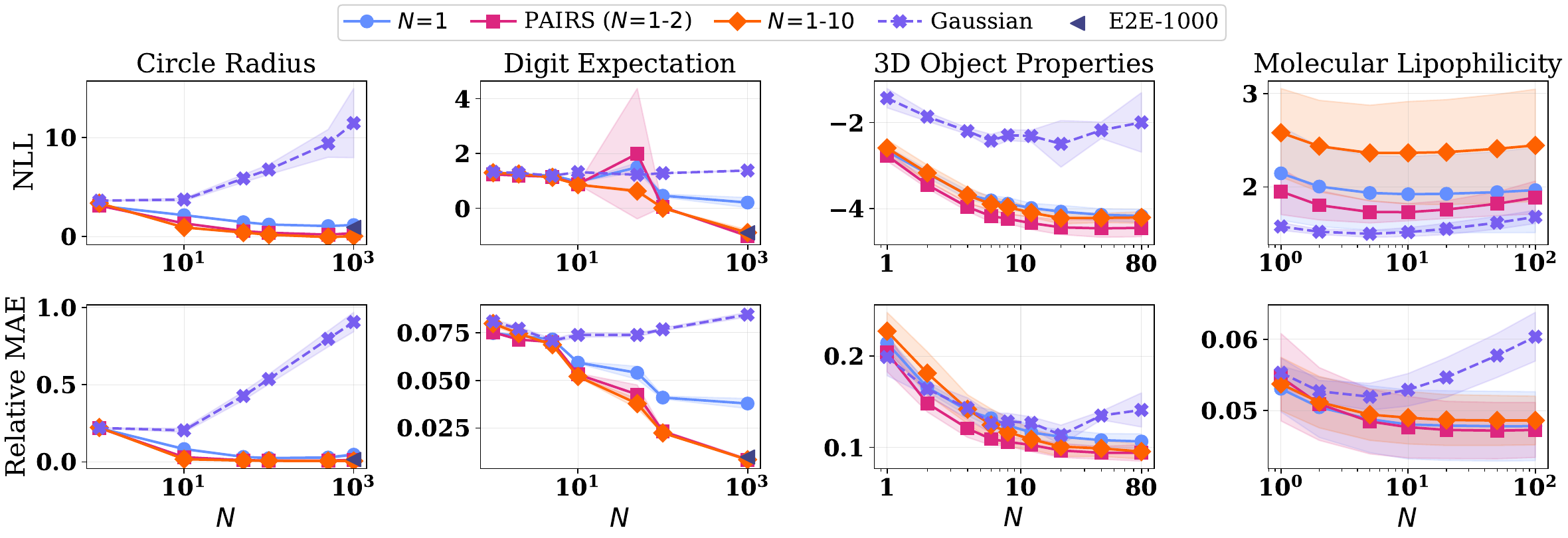}
    \caption{\small \textbf{PAIRS matches or outperforms every baseline on nearly all tasks.} Test NLL (top) and relative MAE (bottom) as a function of $N$ across the four tasks; PAIRS improves monotonically with $N$ despite pretraining only at $N \leq 2$. Shaded regions denote $\pm 1$ std over 3 seeds. }
\label{fig:strategy-comparison}
\end{figure}

\paragraph{What drives scalable amortized inference.}
\autoref{fig:strategy-comparison} reports the test negative log-likelihood (NLL) and relative mean absolute error (RMAE) as a function of $N$ across all four tasks; calibration metrics are deferred to \autoref{app:ACAUC}. PAIRS delivers consistently strong performance on every benchmark, with both metrics improving steadily as $N$ grows, confirming that the two-stage pipeline amortizes inference to large set cardinalities. Extending pretraining to $N=1\text{--}10$ yields a modest gain only on Circle Radius (NLL $0.05$ vs. $0.35$ at $N = 1{,}000$) and matches or underperforms PAIRS on the other three tasks at $3\text{--}4\times$ the pretraining cost. The empirical results align with the theoretical prediction from  Corollary~\ref{cor:invariance}. The Gaussian baseline, despite seeing the same data as PAIRS, degrades as $N$ grows on Circle Radius and underperforms on Digit Expectation and 3D Object Properties, showing that representations learned for point predictions discard structure that the normalizing flow cannot recover at finetuning time. Finally, the $N=1$ baseline performs surprisingly well on Molecular Lipophilicity and 3D Object Properties, but falls behind on Circle Radius (NLL $1.17$ vs. $0.35$ at $N = 1{,}000$) and Digit Expectation where nuisance coupling dominates. These results empirically confirm the theoretical necessity of $N\ge 2$ training: exposure to $N=2$ is necessary to enforce the affine embedding structure via the Cauchy functional equation (\autoref{thm:main}). More broadly, none of these four tasks admits an EF marginal, yet PAIRS improves monotonically with $N$ throughout---an evidence that PAIRS generalizes to large $N$ even when the EF idealization underlying Theorem~\ref{thm:main} does not hold.

\paragraph{Cost-performance tradeoff.}
\begin{wrapfigure}[18]{r}{0.55\textwidth}  
    \vspace{-1.em}
    \centering
    \includegraphics[width=.86\linewidth]{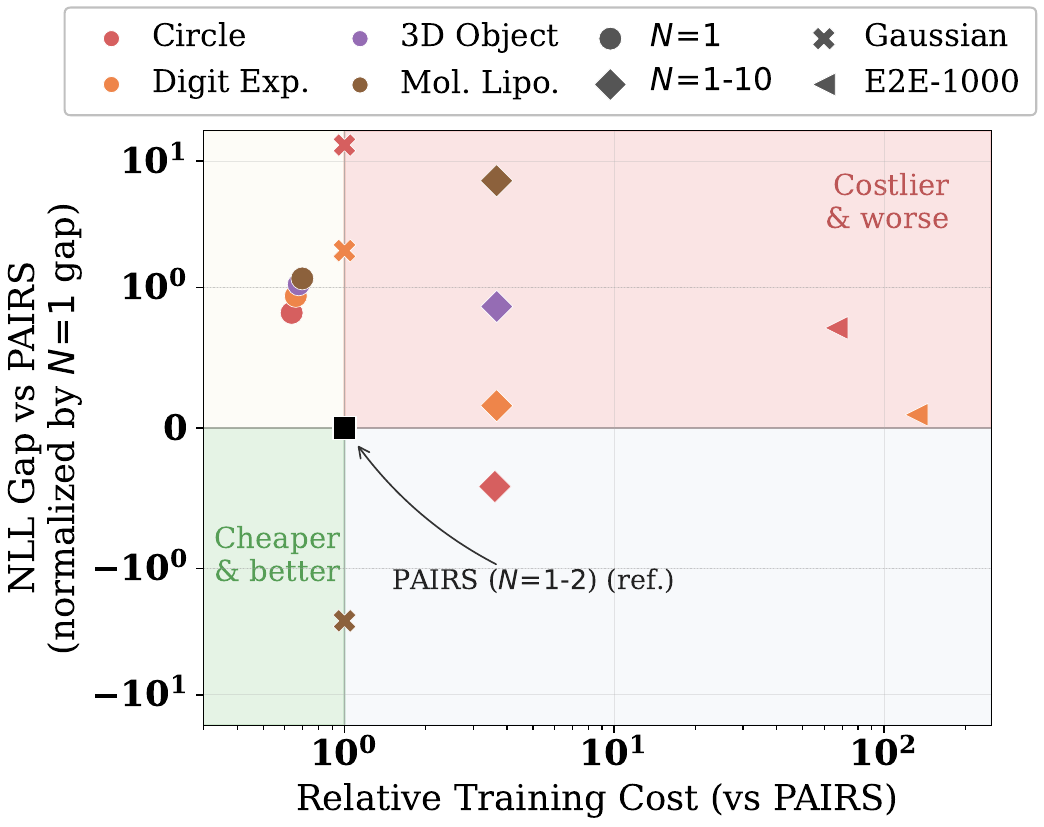}
    \caption{\small Normalized NLL gap vs. PAIRS against relative training cost. \textbf{End-to-end training at $\mathbf{N=1000}$ costs roughly $\mathbf{100\times}$ more than PAIRS for no consistent gain. $\mathbf{N=1}\textbf{-}\mathbf{10}$ is significantly costlier and worse on 3/4 tasks.}}
    \label{fig:pareto}
\end{wrapfigure}
Figure~\ref{fig:pareto} summarizes the cost--performance tradeoff by showing the NLL gap relative to PAIRS (normalized by the $N\!=\!1$ gap) against relative training cost. Across all tasks, PAIRS lies on or near the Pareto frontier: it matches or improves upon $N\!=\!1$ at comparable cost, while end-to-end training at fixed $N=1000$ requires $\approx 100\times$ more compute for no gain. The $N\!=\!1\text{--}10$ strategy, despite costing $3\text{--}4\times$ more than PAIRS, lands in the ``costlier and worse'' quadrant on three of four tasks. The practical recommendation is clear: pretraining on $N \in \{1, 2\}$ with an expressive density-modeling objective offers the best trade-off between posterior quality, computational cost, and scalability to large $N$.
\vspace{2em}
\paragraph{Sensitivity to embedding dimension.}
\begin{wrapfigure}[14]{l}{0.4\textwidth}  
    \centering
    \includegraphics[width=.99\linewidth]{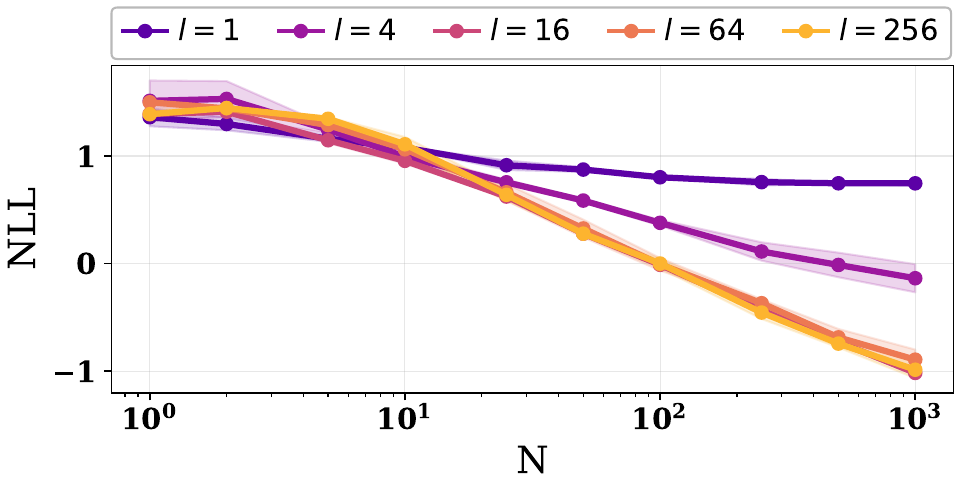}
    \caption{\small \textbf{Embedding dimension $l$ needs to be chosen sufficiently large.} Test NLL of PAIRS on Digit Expectation as a function of $N$, for varying $l$. Small embeddings ($l < 16$) underperform; performance plateaus for $l \geq 16$, consistent with Corollary~\ref{cor:ef}.}
    \label{fig:emb_size_effect}
\end{wrapfigure}

In \autoref{sec:bridging}, we argued that the embedding dimension $l$ is the key hyperparameter controlling PAIRS' representational capacity: per \autoref{thm:main}, $l$ must be at least the dimension $k$ of the canonical sufficient statistic of the EF approximation to the marginal likelihood. We study this empirically on Digit Expectation in \autoref{fig:emb_size_effect}, using a large pretraining set to isolate representational capacity from the backbone's inductive bias. Performance improves monotonically with $N$ at every tested $l$, but small embeddings ($l < 16$) substantially underperform, while performance plateaus for $l \geq 16$, confirming that $l$ need not grow to infinity, only large enough for the induced EF surrogate to faithfully approximate the marginal.  \autoref{app:emb_size_effect} extends this analysis to two architectures (Simple CNN and ResNet) and pretraining cardinalities (PAIRS, $N=1$): PAIRS is invariant to backbone choice and dominates $N=1$ across all configurations; $N=1$ approaches PAIRS only with the ResNet backbone at $l=256$, and even then at the cost of small-$N$ performance.
Competitive $N=1$ results therefore rely on backbone-specific effects that offer no principled handle on how to tune $l$ whereas PAIRS enforces the structure by construction.

\subsection{PAIRS for Conditional Generative modeling}\label{sec:image_gen}
Our final experiment evaluates PAIRS when both observations and targets lie in a high-dimensional space. We consider a procedural renderer that generates 3D scenes composed of 3 colored spheres at random positions under random directional illumination. Observations are $64{\times}64$ RGB images rendered from randomly sampled camera viewpoints, and the inference target is the image from a held-out camera angle. This task is inherently ambiguous when only a few views are available—many scenes are consistent with limited observations—but becomes nearly deterministic as the number of views increases. We replace the normalizing flow head with conditional flow matching~\citep{lipman2022flow}, which scales more naturally to high-dimensional image targets.
At $N=2$, on an H100-80GB GPU, we train with batches of 128 scenes. i.e., with 256 images per batch; at $N{=}100$ the same budget would permit only ${\sim}2$ scenes per batch, making convergence impractically slow. We therefore apply the two-stage PAIRS pipeline and amortize the flow matching head on datasets with $N \in [1, 100]$ at a cost that is independent from $N$. As shown in \autoref{fig:cond_generation}, reconstruction quality improves steadily with $N$: the mean square error decreases from $0.011$ at $N=1$ to $0.004$ at $N=100$ , yielding near-deterministic reconstructions—despite pretraining on only 1–2 views.  
\begin{figure}[h]
    \centering
    \sffamily
    \resizebox{\textwidth}{!}{
    \begin{tikzpicture}[
    every node/.style={inner sep=0, outer sep=0},
]

\def\tilesize{1.8}
\def\hgap{0.08}
\def\vgap{0.18}
\def\sectiongap{0.22}
\def\plotwidth{5}
\def\plotgap{0.50}
\def\tiledir{fig/exp_balls_figures}

\pgfmathsetmacro{\totalh}{2 * \tilesize + \vgap}
\pgfmathsetmacro{\gridx}{\plotwidth + \plotgap}
\pgfmathsetmacro{\outx}{\gridx + \tilesize + \sectiongap}

\pgfmathsetmacro{\labely}{-\totalh - 0.25}
\pgfmathsetmacro{\plotcx}{\plotwidth / 2}
\pgfmathsetmacro{\gtx}{\outx + 5 * (\tilesize + \hgap)}
\pgfmathsetmacro{\gtcx}{\gtx + \tilesize/2}
\pgfmathsetmacro{\gridcx}{(\gridx + \gtx + \tilesize) / 2}

\node[font=\fontsize{9}{10}, anchor=north]
    at (\gridcx, \labely) {(b)};

\node[anchor=north west] (mseplot) at (0, .1) {%
    \includegraphics[width=\plotwidth cm, keepaspectratio]{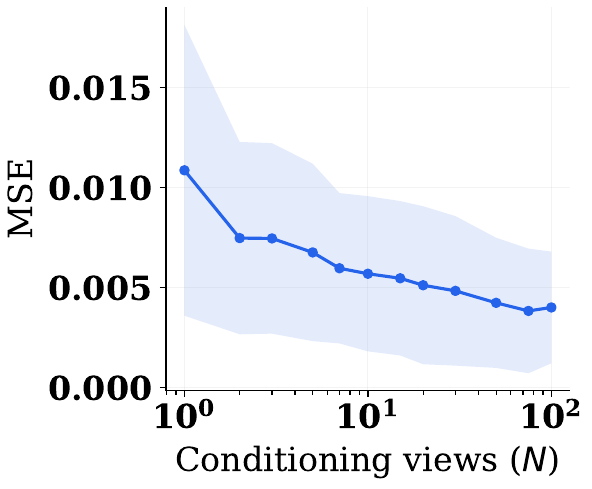}%
};
\node[font=\sffamily\fontsize{9}{10}\selectfont\color{labelgray}, anchor=north]
    at (\plotcx, \labely) {(a)};

\pgfmathsetmacro{\sepx}{\plotwidth + \plotgap*0.45}
\draw[lightgray, line width=0.25pt] (\sepx, 0.3) -- (\sepx, -\totalh - 0.25);

\pgfmathsetmacro{\condcx}{\gridx + \tilesize/2}
\node[font=\sffamily\fontsize{8}{9}\selectfont\color{inputamber!85!black}, anchor=south]
    at (\condcx, 0.2) {\textbf{Inputs}};

\foreach \i/\lbl in {0/{$N{=}1$}, 1/{$N{=}2$}, 2/{$N{=}5$}, 3/{$N{=}10$}, 4/{$N{=}100$}} {
    \pgfmathsetmacro{\cx}{\outx + \i * (\tilesize + \hgap) + \tilesize/2}
    \node[font=\sffamily\fontsize{8}{9}\selectfont\bfseries\color{labelgray}, anchor=south]
        at (\cx, 0.2) {\lbl};
}

\node[font=\sffamily\fontsize{8}{9}\selectfont\bfseries\color{gtgreen!80!black}, anchor=south]
    at (\gtcx, 0.2) {Target};

\pgfmathsetmacro{\arrs}{\outx + 0.12}
\pgfmathsetmacro{\arre}{\outx + 4*(\tilesize + \hgap) + \tilesize - 0.12}
\pgfmathsetmacro{\arrm}{(\arrs + \arre)/2}
\draw[-{Stealth[length=2.8pt,width=2.2pt]}, annogray, line width=0.4pt]
    (\arrs, 0.49) -- (\arre, 0.49);
\node[font=\sffamily\fontsize{9.5}{7}\selectfont\itshape\color{annogray}, anchor=south]
    at (\arrm, 0.5) {More views $\longrightarrow$ less uncertainty, better reconstruction};

\foreach \s in {0, 1} {
    \pgfmathsetmacro{\rowy}{-\s * (\tilesize + \vgap)}

    \node[anchor=north west] at (\gridx, \rowy) {%
        \includegraphics[width=\tilesize cm, height=\tilesize cm]{\tiledir/scenes/scene\s/cond_grid.png}%
    };
    \draw[inputamber!60, line width=0.45pt, rounded corners=0.4pt]
        (\gridx, \rowy) rectangle ++(\tilesize, -\tilesize);

    \foreach \i/\nval in {0/1, 1/2, 2/5, 3/10, 4/100} {
        \pgfmathsetmacro{\tx}{\outx + \i * (\tilesize + \hgap)}
        \node[anchor=north west] at (\tx, \rowy) {%
            \includegraphics[width=\tilesize cm, height=\tilesize cm]{\tiledir/scenes/scene\s/n\nval_grid.png}%
        };
    }

    \node[anchor=north west] at (\gtx, \rowy) {%
        \includegraphics[width=\tilesize cm, height=\tilesize cm]{\tiledir/scenes/scene\s/gt.png}%
    };
    \draw[gtgreen, line width=0.5pt, rounded corners=0.4pt]
        (\gtx, \rowy) rectangle ++(\tilesize, -\tilesize);
}

\end{tikzpicture}
}
    \caption{\small \textbf{PAIRS scales to high-dimensional generative targets.} Novel-view synthesis of 3D scenes composed of colored spheres, from multiple rendered views. \textbf{(a)} MSE of the reconstructed target view as a function of the number of conditioning views $N$, decreasing from $0.011$ at $N{=}1$ to $0.004$ at $N{=}100$. \textbf{(b)} Samples from the posterior over the held-out view concentrate on the true target as $N$ grows, for two random scenes. End-to-end training at $N{=}100$ is computationally infeasible at the same batch size.}
    \label{fig:cond_generation}
    \vspace{-2em}
\end{figure}

\section{Conclusion}
We have introduced PAIRS, a three-stage procedure that decouples the cost of training amortized posterior estimators from the deployment observation set cardinality. At its core is a simple recipe practitioners could plausibly stumble upon—train small, deploy large—whose correctness we explain as much as propose.
Our central result (\autoref{thm:main}) shows that under mild regularity, a mean-pool Deep Set encoder trained on sets of size at most two recovers the same representation as one trained at any larger cardinality, up to affine reparameterization. Corollary~\ref{cor:ef} characterizes this regime as implicitly assuming an exponential-family marginal likelihood---an idealization our experiments show is mild enough to extrapolate well on realistic problems. Empirically, PAIRS matches the analytical posterior on a conjugate Gaussian model up to $N = 10^5$, tracks MCMC on a nuisance-coupled bump hunt, matches or surpasses end-to-end training on four realistic tasks at a fraction of the compute, and scales to high-dimensional image generation where end-to-end training is infeasible. 

Our results open several avenues for future work. On the theoretical side, global sufficiency can only be approximated with finite data, and characterizing the interplay between intra-set variation (larger $N$ per training example) and inter-set variation (more distinct sets) might highlight when training with $N > 2$ is beneficial. On the practical side, we expect PAIRS to require substantially less training data than end-to-end approaches to reach good performance at large $N$, though we have not empirically verified this. Finally, extending PAIRS and the underlying theory to max-pooling, where a functional-equation argument may still apply, is a natural next step, while attention-based pooling is fundamentally harder as discussed in \autoref{sec:related}.

\section*{Acknowledgments}
The authors thank Philipp Windischhofer for providing the MCMC curves for the ``bump hunt'' task and thank both Philipp and Jay Sandesara for their early feedback. The authors would also like to thank Maria Cervera and Sorawit Saengkyongam for discussion and providing feedback on earlier version of the draft, as well as Andy Miller and Guillermo Sapiro for being supportive of the collaboration. MK is supported by the US Department of Energy (DOE) under Grant No.~DE-AC02-76SF00515. CP is supported by the UK Science and Technology Facilities Council (STFC) under Grant No.~ST/W000571/1. LH is supported by BMFTR Project SciFM 05D2025 and the Excellence Cluster ORIGINS, which is funded by the Deutsche Forschungsgemeinschaft (DFG, German Research Foundation) under Germany’s Excellence Strategy - EXC-2094-390783311.

\newpage


\bibliographystyle{plainnat}
\bibliography{biblio}
\newpage

\appendix


\section{Proof of Theorem~\ref{thm:main}}
\label{app:proof}
This appendix establishes three results that together formalize the claims made in \autoref{sec:method}.

\paragraph{Overview of results.}
\begin{description}
    \item[\autoref{thm:main} (pair training recovers a mean-pool sufficient statistic).] Under mild regularity on the embedder class and the assumption that the NPE problem is solved globally at $N \in \{1, 2\}$, the aggregate $\bigl(T_\omega(X_N), N\bigr)$ is sufficient for $\theta$ at every cardinality $N \geq 1$. This is the core guarantee underlying PAIRS.
    \item[Corollary~\ref{cor:ef} (EF marginals characterize mean-pool sufficiency).] Under standard regularity, the sufficiency assumption (M) of \autoref{thm:main} holds \emph{if and only if} the marginal likelihood factorizes as an exponential family with a per-element canonical statistic. In that case the learned embedder is, up to an affine map, the EF canonical statistic itself.
    \item[Corollary~\ref{cor:invariance} (training cardinalities $\geq 2$ are interchangeable).] Any pretraining schedule containing $N=1$ and at least one other cardinality $M \geq 2$ yields embedders in the same affine equivalence class. In particular, training at $\{1, 2\}$ gives the same representation as training at any larger $\{1, M\}$, so enlarging $N_{\mathrm{pre}}$ beyond $\{1, 2\}$ is not necessary.
\end{description}

\paragraph{High-level structure of the proofs.}
\autoref{thm:main} proceeds in five steps. (i) At $N=1$, global optimality of the NPE objective forces the sufficient statistic $t^\star$ to factor through the embedder: $t^\star = g \circ t_\omega$ for some continuous $g$. (ii) At $N=2$, sufficiency of the mean-pool aggregate yields an additive functional equation of the form $g(y_1) + g(y_2) = \tilde h(y_1 + y_2)$. (iii) Elimination of $\tilde h$ and centering around a reference point reduces this to the Cauchy additive equation $\bar g(u_1) + \bar g(u_2) = \bar g(u_1 + u_2)$, whose continuous solutions are linear. (iv) The local affine identity $g(y) = Ay + b$ is then propagated to the full image of $t_\omega$ by a connectedness argument. (v) Affine composition with mean-pooling extends sufficiency to every $N$.

Corollary~\ref{cor:ef} follows from two classical results. The backward direction uses Fisher--Neyman factorization to show that any EF marginal with a per-element canonical statistic admits a fixed finite-dimensional sufficient statistic of mean-pool form at every $N$. The forward direction invokes the Koopman--Pitman--Darmois theorem~\citep{brown1986fundamentals}: any family admitting a fixed finite-dimensional sufficient statistic at every $N \geq 1$ must be exponential family, and exchangeability then forces the per-element structure.

Corollary~\ref{cor:invariance} reuses the machinery of \autoref{thm:main}. Step~(i) applies unchanged at $N=1$. At cardinality $M \geq 2$, fixing $M-2$ of the arguments to a reference point in $\mathrm{Im}(t_\omega)$ reduces the $M$-argument functional equation to the same two-argument Cauchy equation as in the main proof, after which steps (iii)--(v) apply verbatim.

\medskip
We now give the formal statement and proof.

\begin{theoremRef}[Pair training recovers a mean-pool sufficient statistic; formal version of \autoref{thm:main}]\label{thm:main-formal}
Let assumptions (M), (A1)--(A4), and (S) below hold. Then the aggregate $\bigl(T_\omega(X_N), N\bigr)$ is sufficient for $\theta$ at every cardinality $N \geq 1$, and the learned embedder satisfies $t^\star(x) = A t_\omega(x) + b$ for some $A \in \mathbb{R}^{k \times l}$ and $b \in \mathbb{R}^k$.
\end{theoremRef}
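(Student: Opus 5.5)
We follow the five-step structure announced above, treating (A1)--(A4) and (S) as supplying what each step needs: universality and continuity of $t_\omega$, realizability of the true posteriors by the head $q_\phi$, and that the image of $t_\omega$ is connected with nonempty interior. The interior may be relative to an affine hull; if so we work in those coordinates.

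\emph{Step (i), the case $N=1$.} Global optimality of Equation~\ref{eq:NPE} together with realizability gives $q_{\phi^\star}(\theta\mid T_{\omega^\star}(x),1)=p(\theta\mid x)$ almost everywhere. Hence $t_\omega(x)$ is sufficient at $N=1$. Minimality of $t^\star$ in (M) then implies that $t^\star$ is a function of $t_\omega$: there is $g$ with $t^\star=g\circ t_\omega$ on $\mathcal X$. Continuity of $g$ on $\mathrm{Im}(t_\omega)$ is not free, and we take it from (S) or a properness-type assumption in (A).

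\emph{Step (ii), the case $N=2$.} The same argument shows that $\tfrac12(t_\omega(x_1)+t_\omega(x_2))$ is sufficient at $N=2$. Minimality of $\tfrac12(t^\star(x_1)+t^\star(x_2))$ then gives a function $\tilde h$ with
\[
g(y_1)+g(y_2)=\tilde h(y_1+y_2)\qquad\text{for all } y_1,y_2\in \mathrm{Im}(t_\omega).
\]

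\emph{Step (iii), reduction to Cauchy's equation.} Fix $y_0$ in the interior and set $\bar g(u)=g(y_0+u)-g(y_0)$. Comparing the pair $(y_0+u_1,\,y_0+u_2)$ with the pair $(y_0,\,y_0+u_1+u_2)$, which has the same sum, eliminates $\tilde h$. This yields $\bar g(u_1)+\bar g(u_2)=\bar g(u_1+u_2)$ on a neighborhood of $0$. Continuous local solutions of the Cauchy equation are linear: first derive additivity for rationally scaled $u$ near $0$, then use continuity. So $g(y)=Ay+b$ near $y_0$.

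\emph{Step (iv), extension to all of $\mathrm{Im}(t_\omega)$.} The set of points where $g$ agrees locally with this specific affine map is open. By continuity it is also relatively closed, since a neighboring ball has its own affine representation, and two such representations agree on an overlapping open set. Connectedness of the image then gives $g(y)=Ay+b$ globally, so $t^\star=At_\omega+b$.

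\emph{Step (v), all cardinalities.} For any $N$,
\[
M_{t^\star}(X_N)=A\,T_\omega(X_N)+b.
\]
So $(M_{t^\star}(X_N),N)$ is a function of $(T_\omega(X_N),N)$, and by (M) the latter is sufficient at every $N$.

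\emph{Main obstacle.} The hardest part is steps (i)--(iii): turning a.e.\ posterior equality into pointwise factorizations with continuous $g$ and $\tilde h$, and making sure the pair domain contains an open neighborhood so that the local Cauchy argument applies. For the a.e.\ issue, we first try to upgrade through continuity of the posteriors in the statistic and full support of $p(x)$. If that is not enough, we use the nonempty-interior part of (S) to pass to a Lebesgue-point argument.
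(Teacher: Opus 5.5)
Your Steps (i)--(iii) and (v) follow the paper's route. The paper also factors $t^\star=g\circ t_\omega$ at $N=1$, taking continuity of $g$ from (A3), which makes $t_\omega$ a quotient map onto its image. It then derives $g(y_1)+g(y_2)=\tilde h(y_1+y_2)$ at $N=2$ and eliminates $\tilde h$ by comparing with the pair $(c,\,c+u_1+u_2)$, exactly as you do. Your Step (v) reasoning is also sound: a sufficient statistic is a function of $(T_\omega(X_N),N)$, so the latter is sufficient. It does not need $A$ to be invertible, which it is not in general when $k<l$.

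The gap is in Step (iv). Your closedness argument assumes that every limit point $y$ of the good set has a ball $B(y,\rho)\subset\mathrm{Im}(t_\omega)$, so that Step (iii) can be rerun at $y$ and matched with $A\,y+b$ on an open overlap. That holds only at interior points of the image, and (A3) only guarantees that the interior is nonempty. In general the image does not cooperate:
\begin{enumerate}
\item It contains non-interior points. This always happens if $\mathcal X$ is compact, since a nonempty compact subset of $\mathbb R^l$ is never open.
\item It can contain pieces with empty interior. A closed disk with a segment attached is connected with nonempty interior, yet Step (iii) cannot be run at any point of the segment.
\item Its interior can be disconnected even though the image is connected, for example two closed disks touching at a point.
\end{enumerate}
So your open--closed argument only gives the affine form on the component of $\mathrm{int}\,\mathrm{Im}(t_\omega)$ containing $y_0$, plus its closure. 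Step (v), however, needs $t^\star=A\,t_\omega+b$ at every $x$.

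The missing idea is to propagate through the pair equation itself, with one argument pinned in the good ball. This needs no interiority at the target point. Suppose $g=A(\cdot)+b$ on $B(y_0,\rho)\subset\mathrm{Im}(t_\omega)$, and take $p,p'\in\mathrm{Im}(t_\omega)$ with $\|p-p'\|<\rho$. The pairs $(p,\,y_0)$ and $(p',\,y_0+p-p')$ have the same sum, so $g(p)-g(p')=g(y_0+p-p')-g(y_0)=A(p-p')$. Hence $g-A(\cdot)$ is constant on $B(p,\rho)\cap\mathrm{Im}(t_\omega)$ for every $p$, with a uniform radius. It is therefore locally constant on the connected set $\mathrm{Im}(t_\omega)$, so it equals $b$ everywhere; continuity of $g$ is not even needed in this step. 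This cross-pair mechanism is what the paper's ``second move'' in Step~3(d) relies on, where an arbitrary $y_1\in\mathrm{Im}(t_\omega)$ is paired with a point of $c+V$. Your proposal lists Steps (i)--(iii) as the main obstacle and treats (iv) as routine, but (iv) is where your argument actually breaks.
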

\paragraph{Assumptions.}
\begin{description}
\item[(M)] There exists a continuous map $t^\star:\mathcal X\to\mathbb R^k$
with $k\le l$ such that, for every $N\ge 1$, the pair
$\big(M_{t^\star}(X_N),N\big)=\big(\tfrac1N\sum_{i=1}^N t^\star(x_i),N\big)$
is minimal sufficient for $\theta$ given $X_N$ under the marginal
$p(X_N\mid\theta)=\int p(\psi)\prod_i p(x_i\mid\theta,\psi)\,\mathrm d\psi$.
\item[(A1)] The embedder class $\mathcal F$ contains every continuous
$\mathcal X\to\mathbb R^l$.
\item[(A2)] The density-estimator class $\mathcal Q$ is universal: for every
measurable conditional density $p(\theta\mid z,N)$ there exists $\phi$ with
$q_\phi(\theta\mid z,N)=p(\theta\mid z,N)$ $p$-a.e.
\item[(A3)] The optimum $t_\omega$ is continuous and a quotient map onto its
image in $\mathbb R^l$, with image of non-empty interior. (Satisfied, e.g.,
when $t_\omega$ terminates in an unconstrained full-rank linear layer.)
\item[(A4)] The NPE problem~\eqref{eq:NPE} is solved globally at
$N\in\{1,2\}$; equivalently,
$q_{\phi^\star}(\theta\mid M_{t_\omega}(X_N),N)=p(\theta\mid X_N)$ $p$-a.e.\
for $N\in\{1,2\}$.
\item[(S)] The prior $p(\theta, \psi)$ has full support on $\Theta \times \Psi$, and for every $(\theta, \psi)$ the conditional $p(x \mid \theta, \psi)$ has full support on $\mathcal{X}$. Consequently, for each $N \geq 1$ the marginal $p(X_N)$ has full support on $\mathcal{X}^N$.
\end{description}

\paragraph{Step 1: sufficiency at $N=1$ forces $t^\star=g\circ t_\omega$.\\}
At $N=1$, $M_{t_\omega}(\{x\})=t_\omega(x)$. By (A4) and (A2),
$q_{\phi^\star}(\theta\mid t_\omega(x),1)=p(\theta\mid x)$, so $t_\omega$ is
sufficient for $\theta$ given a single observation. By (M), $t^\star$ is
minimal sufficient at $N=1$; hence there exists a measurable
$g:\mathrm{Im}(t_\omega)\to\mathbb R^k$ with
\begin{equation}
t^\star(x)=g\big(t_\omega(x)\big)\qquad\forall x\in\mathcal X.
\label{eq:step1}
\end{equation}
Continuity of $g$ follows from continuity of $t^\star$ together with (A3):
$t_\omega$ being a continuous quotient map makes $g$ continuous via the
universal property of quotients.

\emph{Remark (a.e.\ to everywhere).}
Assumption~(A4) yields the identity $t^\star = g \circ t_\omega$ only
$p$-almost everywhere on $\mathcal{X}$. By~(S), the set on which it holds is
dense; since both sides are continuous (by~(M), (A1), (A3)), the identity
extends to all of $\mathcal{X}$. We apply the same upgrade tacitly in Step~2
for the Cauchy equation~\eqref{eq:cauchy-restricted}, and in Step~5 for the head
recovery.

\paragraph{Step 2: sufficiency at $N=2$ yields an additive functional
equation.\\} 
At $N=2$, (A4)+(A2) give
$q_{\phi^\star}(\theta\mid M_{t_\omega}(X_2),2)=p(\theta\mid X_2)$, so
$M_{t_\omega}(X_2)=\tfrac12(t_\omega(x_1)+t_\omega(x_2))$ is sufficient at
$N=2$. By (M), $M_{t^\star}(X_2)=\tfrac12(t^\star(x_1)+t^\star(x_2))$ is
minimal sufficient at $N=2$. Minimality gives a measurable
$\tilde h:\mathrm{Im}(t_\omega+t_\omega)\to\mathbb R^k$ with
$t^\star(x_1)+t^\star(x_2)=\tilde h\big(t_\omega(x_1)+t_\omega(x_2)\big)$.
Continuity of $\tilde h$ follows as in Step~1. Combining with
\eqref{eq:step1}:
\begin{equation}
g(y_1)+g(y_2)\;=\;\tilde h(y_1+y_2),
\qquad\forall y_1,y_2\in\mathrm{Im}(t_\omega).
\label{eq:cauchy-restricted}
\end{equation}

\paragraph{Step 3: from equation \eqref{eq:cauchy-restricted} to an affine form for $g$.\\}
We now solve equation~\eqref{eq:cauchy-restricted} for $g$ on $\mathrm{Im}(t_\omega)$.
By assumption~(A3), $\mathrm{Im}(t_\omega) \subset \mathbb{R}^l$ has non-empty
interior; fix a point $c$ in this interior and let $V \subset \mathbb{R}^l$ be
an open ball around $0$ small enough that $c + V \subset \mathrm{Im}(t_\omega)$
and $2c + V + V \subset \mathrm{Im}(t_\omega) + \mathrm{Im}(t_\omega)$.

\emph{(a) Eliminate $\tilde{h}$.\\}
Let $U' := \mathrm{Im}(t_\omega) - c \subset \mathbb{R}^l$; by (A3), $U'$ is open
and contains $0$. Choose $r > 0$ small enough that the open ball
$B(0, 2r) \subset U'$, and set $V := B(0, r)$. Then $V + V \subset B(0, 2r) \subset U'$.

For any $u_1, u_2 \in V$, the points $c + u_1$ and $c + u_2$ lie in
$\mathrm{Im}(t_\omega)$, so equation~\eqref{eq:cauchy-restricted} gives
\begin{equation}
g(c + u_1) + g(c + u_2) \;=\; \tilde{h}(2c + u_1 + u_2).
\label{eq:step3-star}
\end{equation}
Specializing~\eqref{eq:step3-star} to $u_2 = 0$ (valid since $0 \in V$) yields,
for all $v \in V$,
\begin{equation}
\tilde{h}(2c + v) \;=\; g(c + v) + g(c).
\label{eq:step3-htilde}
\end{equation}
Since $u_1 + u_2 \in V + V \subset U'$ and $U'$ is the domain on which
$\tilde h(2c + \cdot)$ is determined, we may apply~\eqref{eq:step3-htilde} at
$v = u_1 + u_2$, obtaining
\begin{equation}
g(c + u_1) + g(c + u_2) \;=\; g(c + u_1 + u_2) + g(c),
\qquad \forall\, u_1, u_2 \in V.
\label{eq:step3-elim}
\end{equation}

\emph{(b) Reduce to Cauchy's equation.\\} Define the centered map
\[
\bar{g}: V \to \mathbb{R}^k, \qquad
\bar{g}(u) \;:=\; g(c + u) - g(c).
\]
Subtracting $2 g(c)$ from both sides of~\eqref{eq:step3-elim} gives
\begin{equation}
\bar{g}(u_1) + \bar{g}(u_2) \;=\; \bar{g}(u_1 + u_2),
\qquad u_1, u_2 \in V,
\label{eq:step3-cauchy}
\end{equation}
which is Cauchy's additive equation on the open neighborhood $V$ of $0$.
Note that $\bar{g}(0) = 0$ by construction, and $\bar{g}$ is continuous on $V$
since $g$ is continuous (Step~1).

\emph{(c) Solve Cauchy under continuity.\\} A continuous solution of~\eqref{eq:step3-cauchy}
on a neighborhood of $0$ in $\mathbb{R}^l$ is necessarily the restriction of a
linear map \citep[see e.g.][Thm.~2.1.2]{aczel2006lectures}:
there exists $A \in \mathbb{R}^{k \times l}$ such that
\begin{equation}
\bar{g}(u) \;=\; A u, \qquad u \in V.
\label{eq:step3-linear}
\end{equation}
Equivalently, $g(c + u) = A u + g(c)$ for all $u \in V$, hence
\begin{equation}
g(y) \;=\; A y + b, \qquad b := g(c) - A c,
\qquad \text{for all } y \in c + V.
\label{eq:step3-affine-local}
\end{equation}

\emph{(d) Extend globally to $\mathrm{Im}(t_\omega)$.\\}
We propagate the local affine identity~\eqref{eq:step3-affine-local}
to all of $\mathrm{Im}(t_\omega)$ in two moves.

\emph{First, $\tilde{h}$ is affine on a neighborhood of $2c$.}
For any $y_1, y_2 \in c + V$, applying~\eqref{eq:cauchy-restricted} and
substituting the local form~\eqref{eq:step3-affine-local},
\begin{equation}
\tilde{h}(y_1 + y_2) \;=\; g(y_1) + g(y_2) \;=\; A(y_1 + y_2) + 2b.
\label{eq:step3-htilde-affine}
\end{equation}
Letting $z = y_1 + y_2$ range over $2c + (V + V)$, we obtain
$\tilde{h}(z) = A z + 2 b$ on the open set $W := 2c + (V+V)$.

\emph{Second, extend $g$.}
Fix any $y_1 \in \mathrm{Im}(t_\omega)$ such that there exists
$y_2 \in c + V$ with $y_1 + y_2 \in W$ — equivalently,
$y_1 \in W - (c + V) = c + (V + V)$. Applying~\eqref{eq:cauchy-restricted}
and~\eqref{eq:step3-htilde-affine},
\[
g(y_1) \;=\; \tilde{h}(y_1 + y_2) - g(y_2)
\;=\; A(y_1 + y_2) + 2b - (A y_2 + b)
\;=\; A y_1 + b.
\]
Hence $g = A\,(\cdot) + b$ on $\bigl(c + (V+V)\bigr) \cap \mathrm{Im}(t_\omega)$.

\emph{Iteration.} Repeating the two moves with $c + V$ replaced by
$c + (V + V)$ extends the affine identity to
$c + (V + V + V + V) \cap \mathrm{Im}(t_\omega)$, and so on. After $k$
iterations, $g$ is affine on $B(c, 2^k r) \cap \mathrm{Im}(t_\omega)$,
where $V = B(0, r)$. Since $\mathrm{Im}(t_\omega)$ is connected
(continuous image of the connected space $\mathcal{X}$), this yields
\begin{equation}
g(y) \;=\; A y + b, \qquad \forall\, y \in \mathrm{Im}(t_\omega).
\label{eq:step3-affine-global}
\end{equation}

\paragraph{Step 4: sufficiency at arbitrary $N$.\\} For any $N\ge 1$ and any
$X_N=\{x_i\}_{i=1}^N$,
\begin{equation}
\frac{1}{N}\sum_{i=1}^N t^\star(x_i)
\;=\;A\,\frac{1}{N}\sum_{i=1}^N t_\omega(x_i)+b
\;=\;A\,M_{t_\omega}(X_N)+b.
\label{eq:final}
\end{equation}
The map $(M_{t_\omega}(X_N),N)\mapsto(A\,M_{t_\omega}(X_N)+b,N)$ is an
invertible affine bijection (with the offset $b$ independent of $N$). Since
$(M_{t^\star}(X_N),N)$ is sufficient for $\theta$ at cardinality $N$ by (M),
so is $(M_{t_\omega}(X_N),N)$:
\[
p(\theta\mid X_N)=p\big(\theta\mid M_{t^\star}(X_N),N\big)
=p\big(\theta\mid M_{t_\omega}(X_N),N\big).
\]

\paragraph{Step 5: head recovery at deployment.\\} By (A2), refitting $\phi$
on the cached aggregates $\{(M_{t_\omega}(X_N^{(j)}),N^{(j)})\}_j$ at any
target cardinality distribution $p(N)$ recovers
$q_{\phi^\star_N}(\theta\mid z,N)=p(\theta\mid z,N)$ $p$-a.e.\ at that
cardinality. \qed

\paragraph{Remarks.}
\begin{enumerate}
\item The dimension bound $l\ge k$ is implicit in (A4): if $l<k$, no
$g:\mathbb R^l\to\mathbb R^k$ satisfying \eqref{eq:step1} can make $t_\omega$
sufficient at $N=1$, so (A4) fails.
\item The offset $Nb$ in $\sum_i t^\star(x_i)$ absorbs the
cardinality-dependent affine correction; this is exactly why the head must
be conditioned on $N$ in addition to $T_\omega(X_N)$.
\item Corollary~\ref{cor:ef} is immediate: under an EF sampling kernel with
$\psi$-independent canonical statistic $S(x)$, the marginal likelihood
factorizes through $\big(\sum_i S(x_i),N\big)$ (via the integral over
$\psi$), hence $t^\star=S$ satisfies (M).
\item The proof is stable: if (A4) holds only up to $\varepsilon$ in KL,
Hyers--Ulam stability of Cauchy's equation yields an affine $\tilde g$ with
$\|g-\tilde g\|_\infty \le C\varepsilon$ on compact subsets of
$\mathrm{Im}(t_\omega)$, and the conclusion of Theorem~\ref{thm:main} holds
up to a corresponding $O(\varepsilon)$ affine-equivalence error.
\end{enumerate}

\begin{corollary}[Exponential-family marginals characterize mean-pool sufficiency]
\label{cor:ef}
Assume the marginal $p(X_N\mid\theta) = \int p(\psi)\prod_{i=1}^N p(x_i\mid\theta,\psi)\,\mathrm{d}\psi$ is dominated, smooth in $\theta$, identifiable, and that the support of $x$ does not depend on $\theta$. Then assumption~(M) of Theorem~\ref{thm:main} holds if and only if the marginal admits the exponential-family factorization
\begin{equation}
\label{eq:ef-marginal}
p(X_N\mid\theta) \;=\; h_N(X_N)\,\exp\!\Big(\eta(\theta)^{\!\top}\!\sum_{i=1}^N S(x_i) \;-\; A(\theta,N)\Big),
\end{equation}
for some canonical statistic $S:\mathcal X\to\mathbb R^k$ (independent of $N$ and $\psi$), identifiable natural-parameter map $\eta:\Theta\to\mathbb R^k$, base measure $h_N$, and log-partition $A$. In that case $t^\star = S$, and any NPE-optimal embedder $t_\omega$ trained with $N_{\mathrm{pre}}=\{1,2\}$ satisfies
\[
S(x) \;=\; A\,t_\omega(x) + b, \qquad A\in\mathbb R^{k\times l},\; b\in\mathbb R^{k},
\]
so that $\big(T_\omega(X_N), N\big)$ is sufficient for $\theta$ at every cardinality $N\ge 1$.
\end{corollary}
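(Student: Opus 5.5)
We prove the two directions of the equivalence separately, then obtain the final claim about $t_\omega$ from Theorem~\ref{thm:main} (formal version).

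\emph{Backward direction ($\Leftarrow$).} Assume \eqref{eq:ef-marginal}. By the Fisher--Neyman factorization theorem, $p(X_N\mid\theta)$ depends on $X_N$ through $\theta$ only via $\big(\sum_i S(x_i),N\big)$. Equivalently, it depends only on $\big(\tfrac1N\sum_i S(x_i),N\big)$, because $N$ is known and rescaling is a bijection. Hence $t^\star=S$ gives a sufficient statistic of mean-pool form at every $N$. For minimality we use the Lehmann--Scheff\'e likelihood-ratio criterion:
\[
\log\frac{p(X_N\mid\theta)}{p(X_N'\mid\theta)} \;=\; \log\frac{h_N(X_N)}{h_N(X_N')} \;+\; \eta(\theta)^{\!\top}\Big(\sum_i S(x_i)-\sum_i S(x_i')\Big).
\]
This ratio is $\theta$-free if and only if $\eta(\theta)-\eta(\theta_0)$ is orthogonal to the difference of the sums for all $\theta$. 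Identifiability of $\eta$, taken as affinely spanning $\mathbb R^k$, forces the two sums to be equal. The sums are therefore minimal sufficient at each $N$, and so is the mean. Continuity of $S$ must be added as a hypothesis, or taken from smoothness of the family, so that (M) holds as stated.

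\emph{Forward direction ($\Rightarrow$).} Assume (M). Then for every $N$ there is a fixed $k$-dimensional sufficient statistic $\big(\tfrac1N\sum_i t^\star(x_i),N\big)$. The marginal is dominated, smooth in $\theta$, and has $\theta$-independent support. We would apply the Koopman--Pitman--Darmois theorem in the form of \citet{brown1986fundamentals}: a family with continuous sufficient statistics of bounded dimension for all sample sizes is exponential family.

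The difficulty is that the $x_i$ are not i.i.d.\ under $p(X_N\mid\theta)$, only exchangeable. So we do not invoke KPD on the observations themselves. Instead we argue directly through the factorization. Sufficiency of $\sum_i t^\star(x_i)$ gives
\[
\log p(X_N\mid\theta)-\log p(X_N\mid\theta_0) \;=\; F_N\Big(\theta,\sum_i t^\star(x_i)\Big).
\]
Marginal consistency links $N$ and $N+1$ by integrating out $x_{N+1}$, and the form at each fixed $N$ also carries a functional-equation constraint. At $N=2$, the left-hand side, viewed as a function of $(x_1,x_2)$, depends only on $t^\star(x_1)+t^\star(x_2)$. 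Differentiating in $\theta$ and using smoothness, each coordinate $\partial_\theta F_2(\theta,\cdot)$ composed with the sum is determined by the two per-element contributions, so the same Cauchy argument as Steps 2--3 of the theorem applies. It yields $F_N(\theta,s)=\eta(\theta)^{\!\top}s-A(\theta,N)$, with $\eta$ independent of $N$ after absorbing constants.

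Setting $\theta=\theta_0$ defines $h_N(X_N):=p(X_N\mid\theta_0)$, which gives \eqref{eq:ef-marginal} with $S=t^\star$. Minimality in (M) and identifiability then make $\eta$ identifiable in the required sense.

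\emph{Main obstacle.} This forward step is the hard part: producing a per-element additive $\theta$-dependence from a marginal that is only exchangeable. The plan leans on the mean-pool form built into (M) to get additivity, rather than on i.i.d.-ness. If the Cauchy route is delicate, because $F_N$ is additive only through the specific sum structure, we would fall back to the de~Finetti-type conditional reading, applying KPD conditionally on $\psi$. Care is needed there, since the marginal is the object that must be exponential family, not the conditional.

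\emph{Final claim.} With $t^\star=S$ established, the statement about $t_\omega$ is exactly Theorem~\ref{thm:main} (formal version) applied with $t^\star=S$. It gives $S=A\,t_\omega+b$ and sufficiency of $\big(T_\omega(X_N),N\big)$ at every $N$.
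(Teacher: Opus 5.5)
\textbf{Backward direction and final claim.} Both match the paper's argument: Fisher--Neyman factorization, identifiability of $\eta$ for minimality, then Theorem~\ref{thm:main} with $t^\star=S$. You are right that continuity of $S$ has to be added as an assumption.

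\textbf{The gap: the Cauchy step in the forward direction.} In the theorem, $g(y_1)+g(y_2)=\tilde h(y_1+y_2)$ arises because two per-element maps are summed ($t^\star=g\circ t_\omega$, and both aggregates are sufficient). Here there is only one statistic, and the $\psi$-marginal log-likelihood is not a sum of per-element terms. By Fisher's identity,
\[
\partial_\theta\log p(x_1,x_2\mid\theta)=\mathbb E_{p(\psi\mid x_1,x_2,\theta)}\bigl[\partial_\theta\log p(x_1\mid\theta,\psi)+\partial_\theta\log p(x_2\mid\theta,\psi)\bigr],
\]
so the per-element scores are averaged under a posterior on $\psi$ that depends on both observations. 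There are no ``per-element contributions'' to feed a Cauchy equation. Sufficiency only says that $F_2(\theta,\cdot)$ is \emph{some} function of $s=t^\star(x_1)+t^\star(x_2)$, and nothing forces it to be affine. Your argument is the standard proof of Koopman--Pitman--Darmois in the i.i.d.\ case. There, $\log p(x_1,x_2\mid\theta)=\ell(x_1,\theta)+\ell(x_2,\theta)$ and $N=1$ gives $\ell(x,\theta)=F_1(\theta,t^\star(x))$; integrating over $\psi$ destroys exactly that additivity. Your fallback via KPD conditional on $\psi$ fails for the reason you anticipate: it yields exponential-family conditionals, and a $\psi$-mixture of these is generally not exponential family.

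\textbf{The step cannot be repaired.} The forward implication is false under the stated hypotheses. The paper's sketch has the same hole, since it applies KPD to a family that is only exchangeable. Take $x\mid\theta,\psi\sim\mathcal N(\theta\psi,1)$ with $\psi$ uniform on $\{1,2\}$ and $\theta\in\mathbb R$. Then $p(X_N\mid\theta)=\tfrac12\prod_i\phi(x_i)\,g_N(\theta,s)$, with $s=\sum_i x_i$ and $g_N(\theta,s)=e^{\theta s-N\theta^2/2}+e^{2\theta s-2N\theta^2}$.
\begin{itemize}
\item The family is dominated, smooth, identifiable and has fixed support.
\item $(\bar x_N,N)$ is sufficient. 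It is also minimal, because $g_N(\cdot,s)=g_N(\cdot,s')$ gives $3s=3s'$ after differentiating at $\theta=0$.
\item So (M) holds with $t^\star(x)=x$ and $k=1$.
\item Yet \eqref{eq:ef-marginal} would force $\log g_2(\theta,x_1+x_2)-\log g_1(\theta,x_1)-\log g_1(\theta,x_2)$ to be a function of $x$ plus a function of $\theta$. Since it is constant at $\theta=0$, it would then be free of $x$.
\item But at $x_1=x_2=u$ and $\theta>0$, it equals $\log\tfrac12$ at $u=3\theta/2$ and tends to $0$ as $u\to\infty$.
\end{itemize}
Already at $N=1$ this is a two-component Gaussian location mixture, which has no finite-$k$ exponential-family form.

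Even when the marginal is exponential family at every $N$, $\eta$ need not be $N$-independent. For $x_i=\theta+\psi+\varepsilon_i$ with $\psi\sim\mathcal N(0,\tau^2)$ and $\varepsilon_i\sim\mathcal N(0,1)$, the natural parameter of $\sum_i x_i$ is $\theta/(1+N\tau^2)$. This refutes your claim that $\eta$ is $N$-independent ``after absorbing constants.''

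What can actually be proved is the backward direction together with the final claim about $t_\omega$. The forward direction needs extra hypotheses, for example a degenerate $\psi$, in which case your Cauchy argument goes through.
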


\begin{proof}[Proof sketch]
($\Leftarrow$) Fisher--Neyman factorization applied to~\eqref{eq:ef-marginal} makes $\big(\tfrac{1}{N}\sum_i S(x_i),N\big)$ sufficient at every $N$; identifiability of $\eta$ promotes this to minimal sufficiency, so~(M) holds with $t^\star = S$ and Theorem~\ref{thm:main} applies verbatim. \\
($\Rightarrow$) Under the stated regularity, the Koopman--Pitman--Darmois theorem~\citep[see, e.g.,][Ch.~2]{brown1986fundamentals} states that any family admitting a fixed finite-dimensional sufficient statistic at every $N\ge 1$ is necessarily of exponential-family form. Exchangeability of the $x_i$ together with sufficiency of an additive statistic forces the canonical statistic to take the per-element form $\sum_i S(x_i)$, yielding~\eqref{eq:ef-marginal}.
\end{proof}

\begin{remark}
Condition~\eqref{eq:ef-marginal} is strictly broader than ``$p(x\mid\theta,\psi)$ is exponential family.'' It holds whenever $\psi$-integration preserves additive sufficiency in $\sum_i S(x_i)$---for instance when $p(x\mid\theta,\psi)$ is EF with canonical statistic $S(x)$ independent of $\psi$ and natural parameter decomposing as $\eta(\theta,\psi) = \eta_1(\theta) + \eta_2(\psi)$---and it covers curved EF submodels. The regularity hypothesis (support not depending on $\theta$) excludes order-statistic families such as $\mathrm{Uniform}(0,\theta)$, which admit fixed-dimensional sufficient statistics that cannot be written as a mean of per-element features; these correspond exactly to the regime $l\to\infty$ discussed in~\autoref{sec:bridging}\,(ii).
\end{remark}

\begin{corollary}[Training cardinalities $\geq 2$ are interchangeable]
\label{cor:invariance}
Adopt the assumptions of Theorem~\ref{thm:main}, replacing~(A4) by:
\begin{itemize}
\item[(A4$'$)] $\{1, M\} \subseteq \mathcal{N}_{\mathrm{pre}}$ for some
$M \geq 2$, and the NPE problem~\eqref{eq:NPE} is solved globally at
every $N \in \mathcal{N}_{\mathrm{pre}}$.
\end{itemize}
Then the learned embedder satisfies $t^\star(x) = A\, t_\omega(x) + b$ for
some $A \in \mathbb{R}^{k \times l}$, $b \in \mathbb{R}^k$, and
$\bigl(T_\omega(X_N), N\bigr)$ is sufficient for $\theta$ at every
$N \geq 1$. In particular, any two choices
$\mathcal{N}_{\mathrm{pre}}, \mathcal{N}_{\mathrm{pre}}'$ satisfying~(A4$'$)
yield embedders in the same affine equivalence class, and the finetuned
inference head converges to the same posterior estimator regardless of
which was used.
\end{corollary}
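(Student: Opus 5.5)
The plan is to reuse the five-step proof of Theorem~\ref{thm:main} verbatim, changing only Step~2, where cardinality $2$ is replaced by cardinality $M$. Step~1 uses only the $N=1$ part of (A4$'$), so it still gives a continuous $g:\mathrm{Im}(t_\omega)\to\mathbb R^k$ with $t^\star = g\circ t_\omega$. For $N=M$, (A4$'$) together with (A2) makes $M_{t_\omega}(X_M)$ sufficient. Minimality of $M_{t^\star}(X_M)$ under (M) then yields a measurable $\tilde h_M$ with $\sum_{i=1}^M t^\star(x_i) = \tilde h_M\bigl(\sum_{i=1}^M t_\omega(x_i)\bigr)$. Continuity of $\tilde h_M$ follows from the quotient argument of (A3), and the a.e.-to-everywhere upgrade via (S) applies as before. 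Substituting Step~1 gives
\[
\sum_{i=1}^M g(y_i) \;=\; \tilde h_M\Bigl(\sum_{i=1}^M y_i\Bigr),\qquad y_1,\dots,y_M\in \mathrm{Im}(t_\omega).
\]

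\textbf{Reduction to the two-argument equation.} Fix an interior point $c$ of $\mathrm{Im}(t_\omega)$, which exists by (A3). Set $y_3=\dots=y_M=c$ and define $\tilde h(z) := \tilde h_M\bigl(z+(M-2)c\bigr) - (M-2)g(c)$. This gives exactly equation~\eqref{eq:cauchy-restricted}: $g(y_1)+g(y_2)=\tilde h(y_1+y_2)$ for all $y_1,y_2\in\mathrm{Im}(t_\omega)$. The function $\tilde h$ is continuous because $\tilde h_M$ is. Steps~3(a)--(d) of the main proof then apply unchanged, since they use only this identity, continuity of $g$, the interior point $c$, and connectedness of $\mathrm{Im}(t_\omega)$. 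They yield $g(y)=Ay+b$ on all of $\mathrm{Im}(t_\omega)$, hence $t^\star = A t_\omega + b$. Step~4 then transfers sufficiency to every $N\ge 1$ through the affine map, with the $N$-dependent offset absorbed because the head conditions on $N$.

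\textbf{Equivalence-class and head claims.} Suppose $\mathcal N_{\mathrm{pre}}$ and $\mathcal N'_{\mathrm{pre}}$ both satisfy (A4$'$), with resulting embedders $t_\omega$ and $t_{\omega'}$. The argument above gives $t^\star = A t_\omega + b = A' t_{\omega'} + b'$, so both embedders lie in the affine preimage class of the same $t^\star$. In that sense they are affinely equivalent: each is an affine image, via $A$ or $A'$, of a common minimal statistic. For the head, Step~4 shows that for each embedder the cached aggregate $(T(X_N),N)$ is sufficient, so $p(\theta\mid T_\omega(X_N),N) = p(\theta\mid X_N) = p(\theta\mid T_{\omega'}(X_N),N)$. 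By (A2), the Step~5 refit converges to this common posterior in both cases.

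\textbf{Main obstacle.} The delicate step is the specialization $y_3=\dots=y_M=c$. One must check that the equation for $\tilde h$ genuinely holds on the full set $\mathrm{Im}(t_\omega)+\mathrm{Im}(t_\omega)$ needed in Step~3(d), i.e.\ that fixing arguments at $c$ does not shrink the domain. This holds because $y_1,y_2$ still range over all of $\mathrm{Im}(t_\omega)$. A second point is that "same affine equivalence class" should be stated modulo $\ker A$ when $l>k$, since $t_\omega$ is determined only up to directions that $A$ annihilates. I would make that precise rather than claim $t_\omega$ and $t_{\omega'}$ are affine images of each other.
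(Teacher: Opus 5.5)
Your proposal is correct and follows the paper's proof essentially line by line. It runs Step~1 at $N=1$, derives the $M$-ary functional equation, specializes $y_3=\dots=y_M=c$ with $\tilde h(z)=\tilde h_M\bigl(z+(M-2)c\bigr)-(M-2)g(c)$, and then applies Steps~3--5 verbatim. Your caveat on the equivalence-class claim is a legitimate sharpening of the paper's ``hence with each other.'' When $l>k$ one only gets $A t_\omega + b = A' t_{\omega'} + b' = t^\star$, i.e.\ $t^\star$ is a common affine image of both embedders, not that $t_\omega$ and $t_{\omega'}$ are affine images of each other. Note that your sentence ``each is an affine image \dots\ of a common minimal statistic'' states this direction backwards.
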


\begin{proof}[Proof sketch]
Step~1 of Theorem~\ref{thm:main} applies unchanged at $N = 1$, yielding a
continuous $g$ with $t^\star(x) = g(t_\omega(x))$ on $\mathcal{X}$.

For Step~2, we use cardinality $M$ in place of $N = 2$. Sufficiency of
$\bigl(M_{t_\omega}(X_M), M\bigr)$ and minimal sufficiency of $M_{t^\star}$
under~(M) yield a continuous $\tilde{h}_M$ with
\[
\textstyle\sum_{i=1}^M t^\star(x_i) \;=\; H_M\!\left(\sum_{i=1}^M t_\omega(x_i)\right),
\qquad H_M(z) := M\, \tilde{h}_M(z / M).
\]
Substituting $t^\star = g \circ t_\omega$ and writing $y_i := t_\omega(x_i)$,
\begin{equation}
\sum_{i=1}^{M} g(y_i) \;=\; H_M\!\left(\textstyle\sum_{i=1}^M y_i\right),
\qquad y_1, \ldots, y_M \in \mathrm{Im}(t_\omega).
\label{eq:cor2-mary}
\end{equation}
Fix any reference point $c \in \mathrm{Im}(t_\omega)$ and specialize
\eqref{eq:cor2-mary} to $y_3 = \cdots = y_M = c$:
\[
g(y_1) + g(y_2) + (M - 2)\, g(c) \;=\; H_M\!\left(y_1 + y_2 + (M - 2)\, c\right).
\]
Defining $\tilde{h}(z) := H_M\bigl(z + (M - 2)\, c\bigr) - (M - 2)\, g(c)$
recovers the two-argument Cauchy equation
\begin{equation}
g(y_1) + g(y_2) \;=\; \tilde{h}(y_1 + y_2),
\qquad y_1, y_2 \in \mathrm{Im}(t_\omega),
\label{eq:cor2-cauchy}
\end{equation}
identical to equation~\eqref{eq:cauchy-restricted} of Theorem~\ref{thm:main}.
Steps~3--5 of the original proof then apply verbatim, giving
$t^\star = A\, t_\omega + b$ and sufficiency of $(T_\omega(X_N), N)$ at
every $N \geq 1$.

The invariance claim follows: any two embedders $t_\omega, t_\omega'$
satisfying (A4$'$) each stand in affine correspondence with $t^\star$,
hence with each other, and the downstream head learns the same posterior
$p(\theta \mid T_\omega(X_N), N) = p(\theta \mid X_N)$ in both cases.
\end{proof}

\begin{remark}
Corollary~\ref{cor:invariance} formalizes the structural claim underlying
PAIRS: once the training set includes $N = 1$ and any $N = M \geq 2$,
enlarging $\mathcal{N}_{\mathrm{pre}}$ further cannot change the affine
equivalence class of the learned embedder. Training at varying $N$ up to
$N_{\max}$ --- the prevailing practice for DeepSet-based amortized
inference --- is therefore not necessary to obtain a representation that
generalizes to large $N$. Larger $M$ may still improve optimization
stability or gradient signal in practice, but does not change the
representational content of the optimum.
\end{remark}

\section{Additional results}

\subsection{Calibration of learned neural posterior estimators}\label{app:ACAUC}

Beyond NLL and relative MAE, a useful posterior surrogate should be \emph{calibrated}: credible regions derived from the posterior should match the empirical frequency of containing the true parameter. We report the absolute calibration AUC (ACAUC): for each credibility level $\alpha \in [0,1]$, we compute the empirical coverage of the estimated $\alpha$-credible region, and ACAUC is the absolute area between this coverage curve and the diagonal. A value of $0$ indicates a perfectly calibrated posterior, while $0.5$ corresponds to a constant estimator.

\begin{figure}[H]
    \centering
    \includegraphics[width=1.\linewidth]{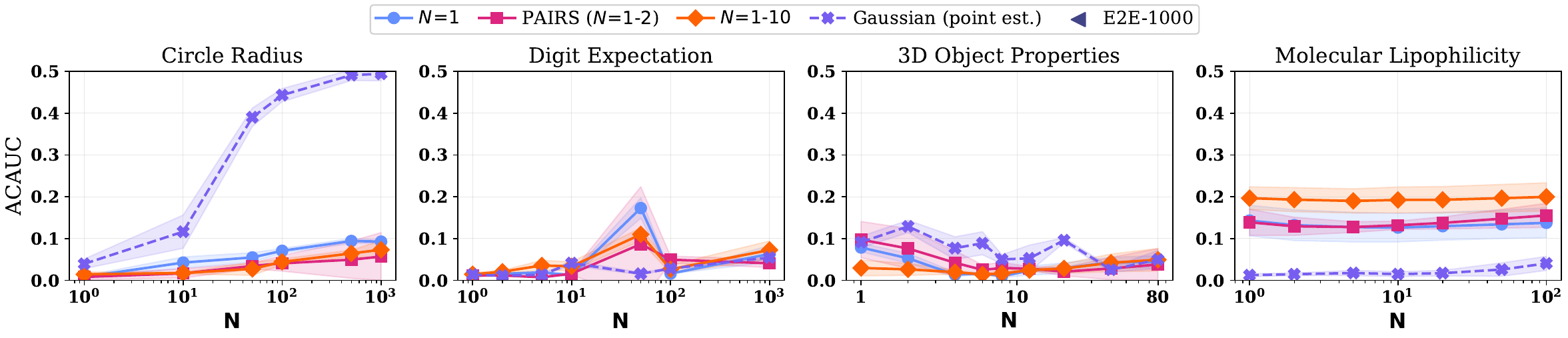}
    \caption{\textbf{Calibration across tasks and baselines.} Absolute calibration AUC (ACAUC) as a function of $N$; lower is better ($0 =$ perfectly calibrated, $0.5 =$ constant estimator).}
    \label{fig:ACAUC}
\end{figure}

\autoref{fig:ACAUC} shows that PAIRS remains well-calibrated as $N$ grows across all tasks, consistent with its strong NLL performance in the main text. The Gaussian baseline, by contrast, becomes progressively miscalibrated on Circle Radius, which is indicative of the model not being better than a constant predictor. 

\autoref{fig:BumpHuntCalib} similarly shows that the PAIRS neural posteriors are unbiased for several choices of the signal location $\psi$ and signal fraction $\theta$ with $N = 100$.

\begin{figure}[H]
    \centering
    \begin{subfigure}[b]{0.48\textwidth}
        \centering
        \includegraphics[width=\linewidth]{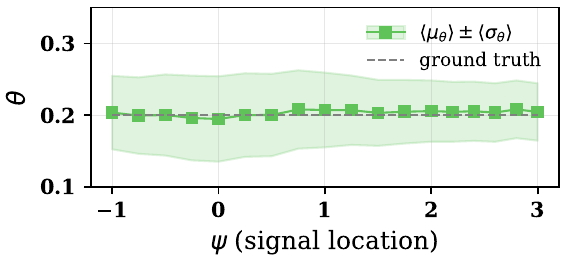}
        \caption{$\theta = 0.2$}
    \end{subfigure}
    \hfill
    \begin{subfigure}[b]{0.48\textwidth}
        \centering
    \includegraphics[width=\linewidth]{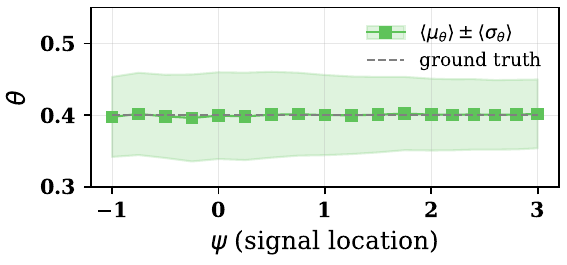}
        \caption{$\theta = 0.4$}
    \end{subfigure}
    \caption{\textbf{Calibration of the bump hunt task at $N = 100$.} The mean $\mu_\theta$ and standard deviation $\sigma_\theta$ of the neural posterior are shown, averaged over observations drawn from the prior at each signal location choice. The neural posterior is unbiased for the tested choices of signal location $\psi$ and signal fraction $\theta$.}
    \label{fig:BumpHuntCalib}
\end{figure}

\subsection{Effect of embedding size on the sufficiency of learned statistics for large N}\label{app:emb_size_effect}

We complement \autoref{sec:systematic_evaluation}, where we studied the effect of the embedding dimension $l$ on PAIRS with a single backbone for the Digit Expectation task, by extending the analysis to two architectures (Simple CNN and ResNet) and two pretraining cardinalities (PAIRS and $N=1$). This shows that the $l$-saturation behavior reported in the main text is not very sensitive to the choice of the specific backbone architecture. We also study the effect of $l$ on the Circle Radius task performance for both PAIRS and $N=1$. 

\begin{figure}[H]
    \centering
    \begin{subfigure}{0.48\linewidth}
        \centering
        \includegraphics[width=\linewidth]{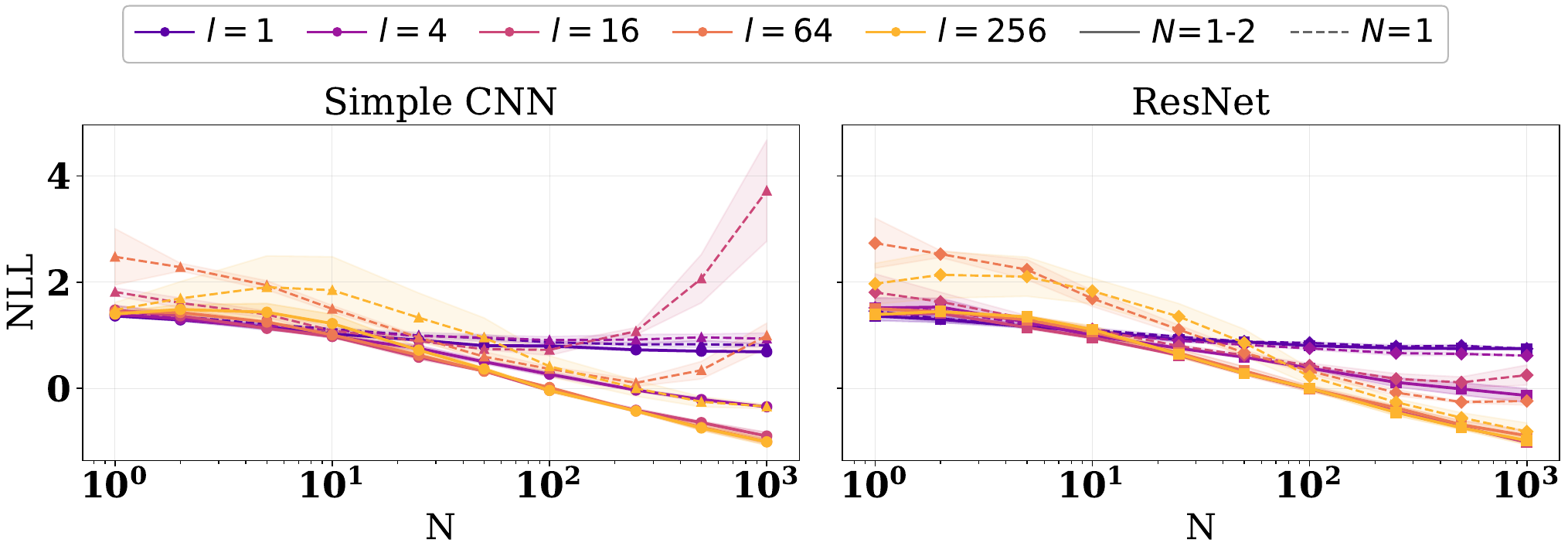}
        \caption{Digit Expectation -- Test NLL}
        \label{fig:emb_size_MNIST_NLL}
\end{subfigure}
\hfill
    \begin{subfigure}{0.48\linewidth}
        \centering
        \includegraphics[width=\linewidth]{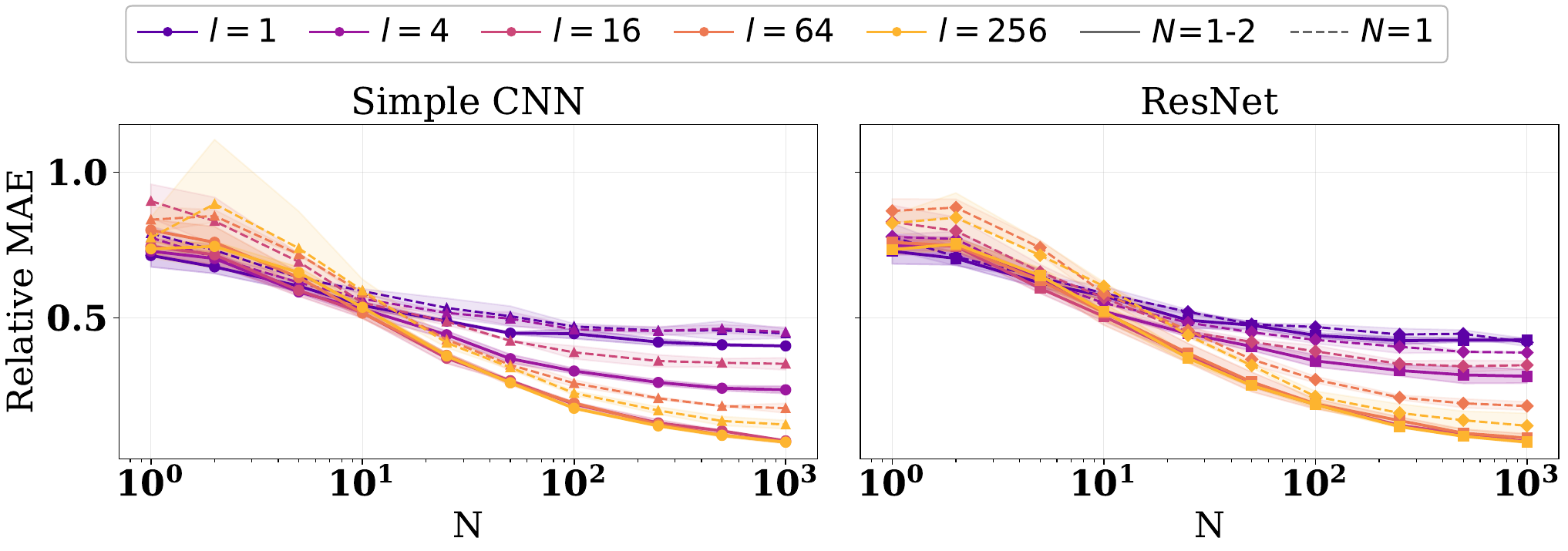}
        \caption{Digit Expectation -- Test relative MAE}
        \label{fig:emb_size_MNIST_MAE}
    \end{subfigure}
\\
\centering
\begin{subfigure}{0.48\linewidth}
        \centering
        \includegraphics[width=\linewidth]{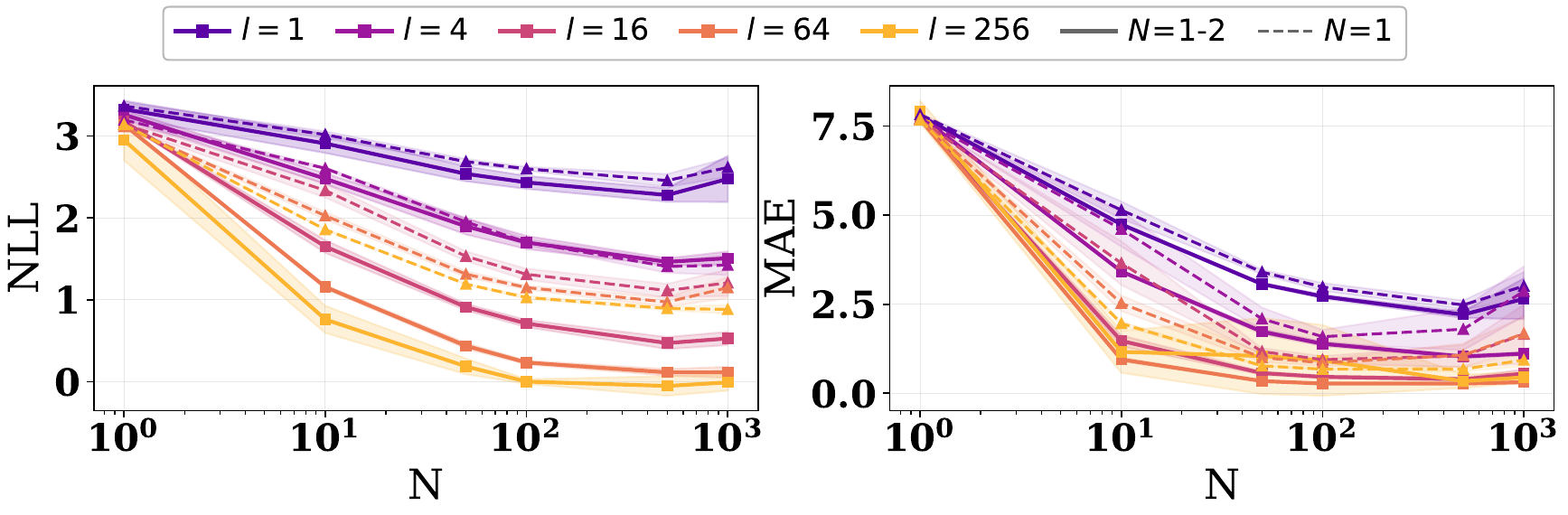}
        \caption{Circle Radius -- Test relative MAE and NLL}
        \label{fig:emb_size_circle_MAE}
    \end{subfigure}
    
    \caption{\textbf{Embedding-size sensitivity is driven by problem complexity, not backbone capacity.} Test performance on Digit Expectation for PAIRS (solid) and $N=1$ (dashed) across two backbones and $l \in \{1, 4, 16, 64, 256\}$. (a) NLL; (b) relative MAE.}
    \label{fig:emb_size_effect_app}
\end{figure}

Across both backbones, PAIRS saturates past $l \geq 16$ and dominates $N=1$ across all configurations. $N=1$ approaches PAIRS only with the ResNet at $l=256$, and even then at the cost of small-$N$ performance. The RMAE trends in (b) mirror the NLL trends in (a), confirming that the $l$-saturation effect is not an artefact of the likelihood-based metric. These findings support the claim in \autoref{sec:systematic_evaluation}: in our large-data regime, representational capacity is controlled by $l$ in a way that reflects problem complexity rather than model capacity.

\section{Experiment Details}\label{app:exp_details}

This appendix gathers the implementation details for every experiment in the paper. \autoref{app:shared-architecture} summarizes the architecture and training conventions used across all tasks. The remaining subsections mirror the experimental sections of the main text: the Gaussian validation of \autoref{sec:bridging} (\autoref{app:bivgauss}), the illustrative Bump Hunt of \autoref{sec:bump} (\autoref{app:bumphunt}), the four systematic-evaluation tasks of \autoref{sec:systematic_evaluation} (\autoref{app:systematic-eval}), and the novel-view synthesis experiment of \autoref{sec:image_gen} (\autoref{app:novelview}). \autoref{app:gaussian-baseline}--\ref{app:compute} document the Gaussian regression baseline, the evaluation metrics, and the compute budget.

\subsection{Shared Architectures}

All experiments follow the three-component architecture of \autoref{sec:method} unless otherwise noted:
\begin{enumerate}
    \item \textbf{Embedder} $t_\omega$ (task-specific): maps each observation $x_i$ to an $l$-dimensional representation $t_\omega(x_i) \in \mathbb{R}^l$.
    \item \textbf{Aggregator}: masked mean pooling over the $N$ per-observation embeddings, $T_\omega(X_N) = \tfrac{1}{N}\sum_{i=1}^N t_\omega(x_i)$, ignoring padded positions.
    \item \textbf{Inference head} $q_\phi$: a conditional density estimator conditioned on the aggregated context concatenated with an encoded observation count. We use a normalizing flow for all tasks except novel-view synthesis, where the target is an image, and we use conditional flow matching~\citep{lipman2022flow}.
\end{enumerate}

\subsection{Architecture and Training Details for Bivariate Gaussian and Bump Hunt Tasks}
\label{app:gauss-bump}

For the bivariate Gaussian task from~\autoref{sec:bridging} and the bump-hunt task from~\autoref{sec:bump}, the PAIRS embedder is a multi-layer perceptron (MLP) with three layers of 128 nodes and $l = 128$.
Pretraining $\nmax = 2$ is used in both cases.
The normalizing flow backbone consists of three stacked piecewise-quadratic
coupling transforms~\citep{DBLP:journals/corr/abs-1808-03856} interspersed with random permutations and ResNets~\citep{DBLP:journals/corr/HeZRS15}; the context MLP is composed of three layers and 128 outputs per layer. The AdamW optimizer is used, and a cosine annealing schedule is applied to the learning rate.
Training sample datasets are generated ``on-the-fly'' during training-, validation-, and test-time.

\subsubsection{Bivariate Gaussian}
\label{app:bivgauss}
This experiment validates the main theorem in a setting for which assumption~(M) holds \emph{exactly} and the analytical posterior is available in closed form.

\paragraph{Data generation.}
Observations are i.i.d.\ bivariate Gaussians $x_i \mid \theta,\psi \sim \mathcal{N}(\theta, \psi)$.
The Wishart prior over the covariance $\psi$ is determined by the scale matrix $W$ such that $W_{11} = 0.5, W_{12} = W_{21} = 0.25, W_{22} = 1$ and degrees of freedom $\nu = 5$.
A Normal-Wishart prior on the Gaussian locations is set with $\mu_0 = (-1, 2)$ and $\lambda_0 = 1$.
Via prior conjugacy, the posterior over the parameters of interest, $\theta_1$ and $\theta_2$, with $\psi$ marginalized out is analytically determined for any observation set.

\paragraph{Training.}
The PAIRS embedder is initially pretrained over 512 epochs of 256 batches, each comprising 1024 sample datasets of two observations, with an initial learning rate of $3 \times 10^{-4}$.
Each NPE head is finetuned at test-time $N$ with the same learning rate, number of batches, and batchsize used for the embedder pretraining.
Unlike the tasks in~\autoref{app:shared-architecture}, $N$ is not an input to the head; rather each head is trained for fixed $N$.
Five values of $N$ are considered: 2 (pretraining), 100, 1000, 10000, and 100000.

\subsubsection{Bump Hunt}
\label{app:bumphunt}

This experiment tests the efficacy of PAIRS in a scenario for which the posterior is non-EF but a direct comparison to the approximate ground truth is viable through MCMC, at least for the $N = 100$ case evaluated here.
It was first introduced by~\citet{Heinrich:2023bmt}.

\paragraph{Data generation.} Observations are drawn from a simple mixture model of two 1D Gaussians.
The ``signal'' Gaussian is narrow, but its location is controlled by the nuisance $\psi$: $x_i \mid \psi \sim \mathcal{N}(\psi, 0.1)$.
The ``background'' Gaussian is fixed but is broad relative to the signal: $x_i \sim \mathcal{N}(0, 1)$.
The parameter of interest is the signal fraction or mixture coefficient $\theta \in [0, 1]$, whose prior density is constant over the unit interval.

\paragraph{Training.} The PAIRS embedder pretraining consists of 200 epochs of 1024 batches with a batchsize of 2048; the initial learning rate is $10^{-3}$.
The NPE head is finetuned at $N = 100$ for 100 epochs composed of 8192 batches with a batchsize of 128.

\paragraph{MCMC.} The MCMC posterior estimate follows the procedure from~\citet{Heinrich:2023bmt}.

\subsection{Architecture and Training Details for All Other Tasks}
\label{app:shared-architecture}

\paragraph{$N$-encoding.}
The observation count is encoded as a scalar---either $N/N_{\max}$ (linear) or $\log(N)/\log(N_{\max})$ (logarithmic)---and concatenated to the aggregate $T_\omega(X_N)$ before conditioning the head. The encoding is \emph{not} learnable; the head learns to consume it during training.

\paragraph{Normalizing flow.}
When the target is low-dimensional, $q_\phi$ is a UMNN normalizing flow~\citep{wehenkel2019unconstrained}: a stack of monotonic autoregressive transformations with an autoregressive conditioner network and a monotonic integrand network.

\paragraph{Training loss.}
The training objective is the negative log-likelihood of the true parameters under the flow, averaged over the batch (Eq.~\eqref{eq:NPE}). All target parameters are standardized to zero mean and unit variance; the Jacobian correction $\sum_j \log \sigma_j$ is applied when reporting NLL in the original parameter space.

\paragraph{Two-stage pipeline.}
\emph{Pretraining}: all three components are trained jointly on small $N$ (strategy-dependent). During training, $N$ is sampled uniformly from $\{1,\dots,N_{\max}\}$ per batch element.
\emph{Amortized finetuning}: the embedder and aggregator are frozen; only the head is trained on pre-computed aggregated embeddings over the target-$N$ distribution. Because the frozen embedder is not traversed during finetuning, memory consumption per sample is negligible and the finetuning batch size is essentially a tuning choice rather than a hardware constraint.

\paragraph{Epoch scaling and gradient-step budget.}
For a fixed pretraining image budget $P$, the number of observation-sets available to a strategy with maximum cardinality $N_{\max}$ is $P / N_{\max}$. To keep the \emph{number of gradient steps} approximately equalized across strategies, we scale the number of pretraining epochs by $N_{\max}$. A consequence is that higher-$N_{\max}$ strategies see each of their (fewer) training sets $N_{\max}$ times more often than the $N=1$ strategy over the course of pretraining; this is a controlled design choice to isolate the effect of the pretraining cardinality rather than the total amount of data seen.

\paragraph{Optimizer.}
All experiments use AdamW; task-specific values of the learning rate, weight decay, and gradient clipping are reported below.

\subsection{Systematic evaluation (\autoref{sec:systematic_evaluation})}
\label{app:systematic-eval}

\autoref{sec:systematic_evaluation} compares PAIRS against baselines on four tasks covering diverse observation modalities (scalar, image, multi-view 3D, molecular graph). Each task exhibits a \emph{shared nuisance} $\psi$ that couples the observations within a set and must be marginalized to recover the target~$\theta$. This is the regime where PAIRS is expected to help: a single observation is insufficient to resolve~$\psi$, so pairs (or larger sets) are needed to expose the coupling and learn an aggregable sufficient statistic. The nature of $\psi$ differs across tasks:
\begin{itemize}
    \item \textbf{Circle Radius.} $\psi$ is the position of the target circle, which remains constant over observations within a set but is not of direct interest; only by seeing multiple images in which the target is fixed but the distractors vary can the model isolate~$\theta$.
    \item \textbf{Digit Expectation.} $\psi$ is the \emph{shared rotation angle} applied identically to all $N$ MNIST images in a set. The rotation creates a 6-vs-9 ambiguity that cannot be resolved from a single observation; multiple observations under the same rotation expose $\psi$ and the digit distribution.
    \item \textbf{3D Object Properties.} $\psi$ is implicit: it includes information concerning the shape of the object that is not directly of interest but influences the observed images. 
    \item \textbf{Molecular Lipophilicity.} $\psi$ is implicit: it encodes the molecular identity (e.g., 2D structure) shared across all conformers in a set, of which each $x_i$ is one thermal realization. LogP is a property of the molecule rather than of any individual conformer, so aggregating conformers averages out conformer-level fluctuations and isolates~$\theta$.
\end{itemize}
\paragraph{Common conventions.}
Each task is run with three seeds. Pretraining strategies $N=1$, $N{=}1{\text{-}}2$ (PAIRS) and $N{=}1{\text{-}}10$ share the same total image budget $P$; the per-strategy number of observation-sets is $P / N_{\max}$ and the number of pretraining epochs is scaled by $N_{\max}$ so that the total number of gradient steps is roughly equalized (see \autoref{app:shared-architecture}). Throughout, ``$M$ samples per~$N$'' at finetuning or test time refers to $M$ independent observation-sets of cardinality $N$, i.e.\ $M \cdot N$ total observations.

\subsubsection{Circle Radius}
\label{app:circle}

\paragraph{Data generation.}
The target is $r \sim \mathcal{U}(15, 50)$, the radius of a circle rendered on a $64{\times}64$ grayscale image, at a constant position ($\psi$). In addition to the target circle, each image contains $3$ distractor circles with positions and radii drawn independently from the same distribution. Within a set, the target circle is fixed while distractors are re-drawn independently across the $N$ images.

\paragraph{Dataset sizes.}
Pretraining uses a total budget of $100{,}000$ images, divided into $100{,}000 / N_{\max}$ observation-sets per strategy. Finetuning uses $5{,}000$ independent sets per finetuning cardinality~$N$ (i.e.\ $5{,}000 \cdot N$ images per cardinality), with $N \in \{10, 100, 1000\}$. Testing uses $500$ sets per~$N$. A $20\%$ validation split is used for early stopping.

\paragraph{Training configuration.}
Pretraining (Table~\ref{tab:circle-pretrain}) uses AdamW (weight decay $10^{-4}$), batch size $100$, learning rate $3{\times}10^{-4}$, and early stopping with patience $15$. Data augmentation consists of random $90^\circ$ rotations. Finetuning runs for $200$ epochs with batch size $50$, learning rate $10^{-4}$, and patience $15$.

\begin{table}[H]
\centering
\caption{Circle Radius: pretraining strategies. ``Epochs (scaled)'' is the base number of epochs ($50$) multiplied by $N_{\max}$ to equalize the total number of gradient steps across strategies.}
\label{tab:circle-pretrain}
\begin{tabular}{lccc}
\toprule
Strategy & $N_{\max}$ & Observation-sets & Epochs (scaled) \\
\midrule
$N = 1$        & 1  & $100{,}000$ & 50  \\
$N = 1$--$2$   & 2  & $50{,}000$  & 100 \\
$N = 1$--$10$  & 10 & $10{,}000$  & 500 \\
\bottomrule
\end{tabular}
\end{table}

\paragraph{Model architecture.}
The embedder is a $4$-layer CNN: \texttt{Conv}$(1 \to 64, k{=}5)$ $\!+\!$ \texttt{ReLU} $\!+\!$ \texttt{MaxPool} $\to$ \texttt{Conv}$(64 \to 128, k{=}5)$ $\!+\!$ \texttt{ReLU} $\!+\!$ \texttt{MaxPool} $\to$ \texttt{Conv}$(128 \to 256, k{=}3)$ $\!+\!$ \texttt{ReLU} $\!+\!$ \texttt{MaxPool} $\to$ \texttt{Conv}$(256 \to 256, k{=}3)$ $\!+\!$ \texttt{ReLU} $\!+\!$ \texttt{MaxPool} $\to$ \texttt{Linear}(flat $\to$ $l$), with embedding size $l = 10$. The flow is a single UMNN transformation with conditioner hidden $[128, 128]$, conditioner output size $32$, and integrand $[64, 64, 64]$. $N$-encoding is linear: $N/N_{\max}$.

\paragraph{Evaluation.}
We evaluate at $N \in \{1, 2, 5, 10, 25, 50, 100, 250, 500, 1000\}$ with $500$ test sets per~$N$ and $1{,}500$ posterior samples per test example.

\subsubsection{Digit Expectation (MNIST)}
\label{app:mnist}

\paragraph{Data generation.}
The target is the expected digit $\theta = \mathbb{E}[\mathrm{digit}] = \sum_{k=0}^{9} k\, p_k$, where $p \sim \mathrm{Dir}(\alpha)$ with $\alpha_6 = \alpha_9 = 2.5$ and $\alpha_k = 0.25$ for $k \notin \{6, 9\}$. Each sample consists of $N$ MNIST images drawn i.i.d.\ from~$p$. All $N$ images in a sample share the \emph{same} rotation angle $\psi \sim \mathcal{U}(0^\circ, 360^\circ)$ --- the nuisance --- and per-image Gaussian observation noise with $\sigma_i \sim \mathcal{U}(0.1, 0.3)$. Class $9$ images are rendered as $180^\circ$-rotated class $6$ images, so that single-image inference cannot distinguish between the two classes.

\paragraph{Dataset sizes.}
Pretraining budget: $250{,}000$ total images. Finetuning: $50{,}000$ independent sets per finetuning cardinality $N \in \{10, 100, 1000\}$. Test: $500$ sets per~$N$.

\paragraph{Training configuration.}
Pretraining uses AdamW (weight decay $10^{-4}$), batch size $256$, learning rate $10^{-4}$, cosine annealing with $5\%$ warmup, gradient clipping (max norm $1.0$), and patience $100$. The base number of epochs is $500$, scaled by $N_{\max}$ (Table~\ref{tab:mnist-pretrain}). Finetuning uses $400$ epochs, batch size $50$, learning rate $3{\times}10^{-4}$, patience $50$, gradient clipping $1.0$.

\begin{table}[H]
\centering
\caption{MNIST: pretraining strategies. Base epochs ($500$) are scaled by $N_{\max}$.}
\label{tab:mnist-pretrain}
\begin{tabular}{lccc}
\toprule
Strategy & $N_{\max}$ & Observation-sets & Epochs (scaled) \\
\midrule
$N = 1$        & 1  & $250{,}000$  & 500    \\
$N = 1$--$2$   & 2  & $125{,}000$  & 1{,}000 \\
$N = 1$--$10$  & 10 & $25{,}000$   & 5{,}000 \\
\bottomrule
\end{tabular}
\end{table}

\paragraph{Model architecture.}
The embedder is a compact ResNet: \texttt{Conv} stem $(1 \to 64, k{=}3)$ with \texttt{GroupNorm}, followed by $3$ stages of \texttt{BasicBlock}s ($64 \to 64 \to 128 \to 256$ with strides $1, 2, 2$ and one block per stage), adaptive average pooling, and a projection MLP $[256 \to 256 \to \texttt{ReLU} \to \texttt{Dropout}(0.1) \to l]$ with $l = 256$. \texttt{GroupNorm} (groups $= \min(32, C)$) replaces \texttt{BatchNorm} throughout to avoid evaluation-mode collapse under variable set sizes. The flow is a UMNN with $2$ transformations, conditioner $[256, 256]$, output size $64$, and integrand $[128, 128, 128]$. $N$-encoding is linear: $N/1000$.

\paragraph{Evaluation.}
We evaluate at $N \in \{1, 2, 5, 10, 25, 50, 100, 250, 500, 1000\}$ with $500$ test sets per~$N$ and $500$ posterior samples per test example.

\subsubsection{3D Object Properties (Multi-View)}
\label{app:mvcnn}

\paragraph{Data generation.}
The targets are two geometric properties (volume and log aspect ratio) of ModelNet40 objects~\citep{wu20153d}. The dataset contains approximately $12{,}000$ CAD meshes. Views are pre-rendered RGB images at $224 \times 224$ resolution; each image is the projection of a full object. Extra noise comes via \emph{partial-crop windows} used to occlude each view: at train, finetune, and test time we take a rectangular crop showing $35$--$50\%$ of each spatial dimension and fill the remainder with grey (the crop window is sampled independently per image). A sample of size $N$ thus consists of $N$ independently-cropped views of the \emph{same} object.

\paragraph{Augmentation regime.}
Because the embedder is frozen during finetuning, augmentation must be handled differently in the two stages:
\begin{itemize}
    \item \emph{Pretraining:} partial-crop augmentation is applied \emph{on-the-fly} every epoch---each epoch sees a fresh draw of crop windows for every image.
    \item \emph{Finetuning:} the embedder is frozen, so we pre-compute aggregated embeddings offline. To preserve the stochastic effect of augmentation, we perform $5$ augmentation passes per image and cache the resulting aggregates; each epoch of finetuning samples uniformly from these $5$ cached versions.
\end{itemize}
This gives the flow exposure to augmentation variance without the cost of re-running the frozen embedder.

\paragraph{Dataset sizes.}
Pretraining budget: $200{,}000$ rendered images in total, split across approximately $12{,}000$ ModelNet40 objects (so each object contributes roughly $17$ views). Finetuning: $100{,}000$ observation-sets per cardinality $N \in \{12, 40, 80\}$, each expanded by $5$ augmentation passes. Test: $5{,}000$ sets per~$N$.

\paragraph{Training configuration.}
Pretraining uses a base of $100$ epochs, learning rate $10^{-3}$, gradient clipping $1.0$, and patience $15$. Per-strategy batch sizes are $200$ (for $N{=}1$ and $N{=}1{\text{--}}2$), $100$ (for $N{=}1{\text{--}}5$), and $64$ (for $N{=}1{\text{--}}10$); these are dictated by GPU memory, because pretraining is the only stage that exercises the full ResNet-18 on $224{\times}224$ images. Finetuning uses $200$ epochs, batch size $64$, learning rate $10^{-4}$, and patience $15$.

\paragraph{Model architecture.}
The embedder is a ResNet-18 with \texttt{GroupNorm} (no \texttt{BatchNorm}): Conv $7 \times 7$ stem, $4$ stages ($64 \to 128 \to 256 \to 512$, $2$ BasicBlocks each), adaptive average pooling, and FC$(512 \to l)$ with $l = 16$. The flow is a UMNN with $2$ transformations, conditioner $[128, 128]$, output size $32$, and integrand $[64, 64, 64]$. $N$-encoding is logarithmic: $\log(N)/\log(N_{\max})$.

\paragraph{Evaluation.}
We evaluate at $N \in \{1, 2, 4, 6, 8, 12, 20, 40, 80\}$ with $5{,}000$ test sets per~$N$ and $500$ posterior samples per test example.

\subsubsection{Molecular Lipophilicity (Conformers)}
\label{app:conformers}

\paragraph{Data generation.}
The target is the Crippen logP of a drug-like molecule, computed from its SMILES via RDKit. Each sample consists of $N$ 3D conformers of the same molecule drawn from the GEOM-Drugs dataset~\citep{axelrod2022geom}, which contains $\approx 304$k molecules with conformers generated by CREST.

\paragraph{Dataset splits.}
Molecules are split $60\%$ pretrain / $20\%$ finetune / $20\%$ test with no molecule overlap between splits. At evaluation time we restrict to molecules whose number of available conformers is at least $\max_N$; this ensures that the test distribution over molecules does not shift with the evaluation cardinality (otherwise, larger $N$ would silently select for conformer-rich molecules).

\paragraph{Training configuration.}
Pretraining uses $200$ epochs, batch size $128$, learning rate $10^{-4}$, AdamW (weight decay $10^{-5}$), gradient clipping $1.0$, $5\%$ warmup, patience $30$, capped at $500$ batches per epoch. Finetuning uses $30$ epochs, batch size $128$, learning rate $3{\times}10^{-4}$, patience $15$, gradient clipping $1.0$, $5\%$ warmup.

\paragraph{Model architecture.}
The embedder is SchNet~\citep{schutt2017schnet} with $5$ interaction layers, $128$ hidden channels, $50$ Gaussians for distance expansion, and a cutoff of $10.0$\,\AA, followed by global mean pooling over atoms and \texttt{Linear}$(128 \to l)$ with $l = 128$. The flow is a single UMNN transformation with conditioner $[256, 256]$, output size $32$, and integrand $[128, 128, 128]$. $N$-encoding is logarithmic: $\log(N)/\log(N_{\max})$.

\paragraph{Evaluation.}
We evaluate at $N \in \{1, 2, 5, 10, 20, 50, 100, 200\}$ with $500$ posterior samples per test example.

\subsection{Novel-view synthesis (\autoref{sec:image_gen})}
\label{app:novelview}

The final experiment applies PAIRS to a conditional generative modeling problem in which both observations and targets are images. Here the inference head is not a normalizing flow but a U-Net trained with conditional flow matching.

\paragraph{Data generation.}
Each scene contains $3$ colored spheres placed in a unit cube, rendered by a GPU-vectorized procedural ray caster. Sphere centers are sampled uniformly in $[-0.8, 0.8]^3$ subject to a non-overlap constraint; radii $\sim \mathcal{U}(0.15, 0.4)$; and each sphere gets a random saturated color. Scene-level illumination consists of a directional light with azimuth $\sim \mathcal{U}(0, 2\pi)$, elevation $\sim \mathcal{U}(\pi/6, \pi/3)$, ambient coefficient $0.2$, and a per-channel background intensity $\sim \mathcal{U}(0.05, 0.2)$. Cameras use orthographic projection with azimuth $\sim \mathcal{U}(0, 2\pi)$ and elevation $\sim \mathcal{U}(\pi/8, 3\pi/8)$. Observations and targets are $64 \times 64$ RGB images. At training time, images are perturbed with Gaussian noise ($\sigma = 0.03$); at test time, renders are noise-free. The inference target is the image from a \emph{held-out} camera angle, conditioned on $N$ observed views and the target camera's extrinsics. Unlike the parameter-estimation tasks of~\autoref{sec:systematic_evaluation}, there is no clean $(\theta, \psi)$ split: the underlying scene (positions, radii, colors, illumination) is a shared latent that determines both the observed views and the target view. Posterior uncertainty over $\theta$ thus reflects how under-determined the scene is by $N$ views, and shrinks as $N$ grows.

\paragraph{Dataset sizes.}
The total pretraining image budget is $100{,}000$, divided into $100{,}000 / 2$ observation-sets. Each finetuning epoch renders $10{,}000$ fresh sets on the fly (the renderer is fast enough that on-line generation is preferable to caching). Testing uses $500$ sets per~$N$.

\paragraph{Model architecture.}
The embedder is a ResNet-style CNN with a $7 \times 7$ stem followed by strided \texttt{GroupNorm}/\texttt{SiLU} blocks, taking as input the RGB image concatenated with a per-pixel ray-direction map (input channels $= 6$); the output is an $l = 256$-dimensional embedding. A separate small MLP encodes the target-camera extrinsics into a $64$-dimensional vector. The aggregator mean-pools the observation embeddings and concatenates the aggregated embedding, the log-$N$ encoding, and the target-camera encoding into a conditioning vector.

The inference head is a U-Net velocity network trained with conditional flow matching~\citep{lipman2022flow}: input channels are $6$ (noised target image + per-pixel ray-direction map), time is injected as a sinusoidal embedding, and the conditioning vector is injected into every residual block via adaptive \texttt{GroupNorm}. The reference configuration uses base channels $192$, $3$ residual blocks per resolution, and time-embedding dimension $256$.

\paragraph{Training configuration.}
Pretraining runs at $N \in \{1, 2\}$ (PAIRS) with AdamW, learning rate $3 \times 10^{-4}$, cosine annealing, gradient clipping $1.0$, patience $30$, and $200$ epochs, batch size is $64$, dictated by U-Net memory at $64 \times 64$. Finetuning runs for $100$ epochs at batch size $64$ as well, learning rate $10^{-4}$, and patience $20$, on freshly rendered data each epoch (the embedder and aggregator are frozen, so each training example requires only one forward pass through the frozen networks plus a full U-Net step).

\paragraph{Hyperparameter search.}
Because this task is new and the end-to-end alternative is computationally infeasible at large $N$, we conducted a small grid search over the $N{=}1{\text{--}}2$ pretraining configuration at $64 \times 64$. The grid varied three factors:
\begin{itemize}
    \item U-Net base channels $\in \{32, 64, 96\}$;
    \item embedding dimension $l \in \{16, 32, 64\}$;
    \item pretraining learning rate $\in \{5 \times 10^{-4}, 10^{-3}, 3 \times 10^{-3}\}$.
\end{itemize}
The full factorial ($27$ configurations, single seed) was launched as a Bolt parent job and selected on validation MSE; the reference configuration reported above was obtained by extending the winning direction (higher capacity for both U-Net and embedder).

\paragraph{Sampling and evaluation.}
Generation uses a $50$-step Euler integration of the learned velocity field. Evaluation reports the per-pixel MSE of the sampled target view against the ground-truth held-out render, averaged over $500$ test scenes and evaluated at $N \in \{1, 2, 5, 10, 25, 50, 100\}$.

\subsection{Gaussian regression baseline}
\label{app:gaussian-baseline}

In \autoref{sec:systematic_evaluation} we compare PAIRS against a \emph{Gaussian regression} baseline that isolates the effect of the likelihood-based training objective. The baseline shares the embedder, aggregator, and $N$-encoding of PAIRS but replaces the flow head with an MLP trained under the mean-squared-error loss on $N \in \{1, 2\}$. After training, we estimate a per-dimension residual standard deviation $\hat\sigma = \mathrm{std}(\theta_{\mathrm{true}} - \theta_{\mathrm{pred}})$ on a held-out set. At test time the approximate posterior is $\mathcal{N}(\hat\theta, \hat\sigma^2 I)$---a fixed-width Gaussian centered on the point prediction. All metrics (NLL, relative MAE, ACAUC) are computed from $1{,}000$ samples drawn from this Gaussian.

\subsection{Evaluation metrics}
\label{app:metrics}

All metrics are computed from posterior samples of shape $(n_\mathrm{test}, n_\mathrm{samples}, n_\mathrm{params})$:

\begin{itemize}
    \item \textbf{Relative MAE.} Let $\hat\theta^{(j)}$ denote the posterior mean for test example $j$ and $\theta^{(j)}$ the true target. We report
    \[
        \mathrm{RMAE} \;=\; \frac{1}{R}\,\frac{1}{n_\mathrm{test}}\sum_{j=1}^{n_\mathrm{test}} \bigl|\,\hat\theta^{(j)} - \theta^{(j)}\,\bigr|,
    \]
    where $R$ is a task-dependent normalizing constant chosen as the range of the prior over $\theta$: $R = 35$ for Circle Radius (radii in $[15, 50]$), $R = 9$ for Digit Expectation (in $[0, 9]$), $R = 15$ for Molecular Lipophilicity (logP roughly in $[-5, 10]$), and $R = 1$ for 3D Object Properties (whose targets are standardized to unit variance, so the reported value coincides with the standardized MAE). For multi-parameter targets we average the per-parameter normalized errors.
    \item \textbf{NLL.} Negative log-likelihood of the true value under the learned head, including the Jacobian correction $\sum_j \log \sigma_j$ for the per-dimension parameter standardization used during training.
    \item \textbf{ACAUC.} Absolute calibration AUC, computed over $100$ uniformly spaced credible levels (see \autoref{app:ACAUC}). A value of $0$ indicates a perfectly calibrated posterior; $0.5$ corresponds to a constant estimator.
\end{itemize}

\subsection{Compute}
\label{app:compute}

All final-result experiments were run on single-GPU jobs (one H100-class GPU per job, $64$\,GB RAM, $256$\,GB disk). 

\paragraph{Pareto plot ``Relative Training Cost''.}
The horizontal axis of Figure~\ref{fig:pareto} reports the FLOP count of each strategy normalized by that of the PAIRS ($N{=}1{\text{--}}2$) run of the same task. Per-architecture FLOPs are counted analytically, layer-by-layer, on a forward pass; the forward count is then multiplied by $3$ to account for the gradient computation (forward plus two backward passes) and by the total number of gradient steps. For two-stage strategies (PAIRS, $N{=}1$, $N{=}1{\text{--}}10$), the total is the sum of (i) pretraining cost at average cardinality $(1 + N_{\max})/2$ and (ii) finetuning cost on the frozen aggregates at the three target cardinalities; the finetuning contribution does not depend on the pretraining strategy. For end-to-end strategies (E2E-$1000$), the FLOP cost scales linearly in the fixed number of observations per gradient step, yielding the approximately $100{\times}$ separation from PAIRS visible in the plot. 


\newpage

\end{document}